\documentclass{article}

\usepackage{arxiv}
\usepackage[utf8]{inputenc} % allow utf-8 input
\usepackage[T1]{fontenc}    % use 8-bit T1 fonts
\usepackage{hyperref}       % hyperlinks
\usepackage{url}            % simple URL typesetting
\usepackage{booktabs}       % professional-quality tables
\usepackage{amsfonts}       % blackboard math symbols
\usepackage{nicefrac}       % compact symbols for 1/2, etc.
\usepackage{microtype}      % microtypography
\usepackage{lipsum}		% Can be removed after putting your text content
\usepackage{graphicx}
\usepackage{natbib}
\usepackage{doi}

%
% --- inline annotations
%
\usepackage[dvipsnames]{xcolor}

\newcommand{\norm}[1]{\left\lVert#1\right\rVert}
\newcommand{\snorm}[1]{\left\lvert#1\right\rvert}

\newcommand{\bs}[1]{\boldsymbol{#1}}

\newcommand{\s}{{\boldsymbol s}}
\newcommand{\n}{{\boldsymbol n}}
\newcommand{\x}{{\boldsymbol x}}
\newcommand{\y}{{\boldsymbol y}}
\newcommand{\z}{{\boldsymbol z}}
\newcommand{\w}{{\boldsymbol w}}

\newcommand{\I}{{\boldsymbol I}}
\newcommand{\A}{{\boldsymbol A}}

 %% fourier transform
\newcommand{\Nc}{{\mathcal N}} %% normal distribution
 %% degradation operator
\newcommand{\Lc}{{\mathcal L}} %% denote loss functions

\newcommand{\Eb}{{\mathbb E}} %% expectation

\newcommand{\R}{\mathbb{R}}
\newcommand{\alphabar}{\bar{\alpha}}
\newcommand{\xzt}{\hat{\boldsymbol x }_{0|t}}

\newcommand{\gradxt}{\nabla_{\x_t}}
\newcommand{\argmin}[1]{\underset{#1}{\arg\min}}

\newcommand{\name}{SaFaRI} %% name

%%%%%%%%%%%%%%%%%%%%%%%%%%%%%% new environment

\usepackage{amsmath,amssymb,amsthm}
\usepackage{bm}
\usepackage{indentfirst}
\usepackage{booktabs}
\usepackage{multirow}
\usepackage{xcolor,colortbl}
\usepackage{color, colortbl}
\definecolor{Gray}{gray}{0.85}

\newtheorem{remark}{Remark} %%% newtheorem* means no numbering

\newtheorem*{lemma1}{Lemma 1}

\usepackage{algpseudocode}
\usepackage{algorithm}

\usepackage{thmtools,thm-restate}

\title{Spatial-and-Frequency-aware Restoration method 
\\ for Images based on Diffusion Models}

%\date{September 9, 1985}	% Here you can change the date presented in the paper title
%\date{} 					% Or removing it

\author{{Kyungsung Lee\thanks{Department of Mathematical Sciences,
Seoul National University, Seoul, South Korea}\;
\thanks{Equal contribution}} \\
	\texttt{kslee0304@snu.ac.kr} \\
	%% examples of more authors
	\And
	{Donggyu Lee\footnotemark[1]\;
    \footnotemark[2]} \\
	\texttt{dglee442@snu.ac.kr} \\
    \And
    {Myungjoo Kang\footnotemark[1]\;
    \thanks{Corresponding author}}  \\
	\texttt{mkang@snu.ac.kr} \\
	%% \AND
	%% Coauthor \\
	%% Affiliation \\
	%% Address \\
	%% \texttt{email} \\
	%% \And
	%% Coauthor \\
	%% Affiliation \\
	%% Address \\
	%% \texttt{email} \\
	%% \And
	%% Coauthor \\
	%% Affiliation \\
	%% Address \\
	%% \texttt{email} \\
}

% Uncomment to remove the date
\date{}

% Uncomment to override  the `A preprint' in the header
%\renewcommand{\headeright}{Technical Report}
% \renewcommand{\undertitle}{Technical Report}

%%% Add PDF metadata to help others organize their library
%%% Once the PDF is generated, you can check the metadata with
%%% $ pdfinfo template.pdf
\hypersetup{
pdftitle={Spatial-and-Frequency-aware Restoration method for Images based on Diffusion Models},
% pdfsubject={q-bio.NC, q-bio.QM},
pdfauthor={Kyungsung Lee, Donggyu Lee, Myungjoo Kang},
% pdfkeywords={First keyword, Second keyword, More},
}

\begin{document}
\maketitle
\begin{abstract}
Diffusion models have recently emerged as a promising framework for Image Restoration (IR), owing to their ability to produce high-quality reconstructions and their compatibility with established methods. Existing methods for solving noisy inverse problems in IR, considers the pixel-wise data-fidelity. In this paper, we propose SaFaRI, a spatial-and-frequency-aware diffusion model for IR with Gaussian noise. Our model encourages images to preserve data-fidelity in both the spatial and frequency domains, resulting in enhanced reconstruction quality. We comprehensively evaluate the performance of our model on a variety of noisy inverse problems, including inpainting,  denoising, and super-resolution. Our thorough evaluation demonstrates that SaFaRI achieves state-of-the-art performance on both the ImageNet datasets and FFHQ datasets, outperforming existing zero-shot IR methods in terms of LPIPS and FID metrics.
\end{abstract}
\section{Introduction}
\label{sec:intro}

In the field of Image Restoration (IR), the overarching objective is to reconstruct an original image from a degraded or corrupted version of it. A classic approach \cite{rudin} is to use a variational model, which minimizes a cost function that includes a data-fidelity term and a regularization term \cite{datafidreg}. The data-fidelity term measures the difference between the restored image and the ground truth image, while the regularization term encourages the restored image to be smooth or have other desirable properties. If the measurement noise is Gaussian, the data-fidelity term is
\begin{equation} \label{eqn:datafid}
    \norm{\y-\A\x}_2^2\, ,
\end{equation} 
where $\y$ is a measurement, $\A$ is a degradation operator and $\x$ is a reconstructed image.

Meanwhile, diffusion models have gained widespread recognition as foundational models for generative modeling, offering a robust and flexible approach to data generation \cite{ddpm, scoresde, beatgan}. 

It exhibits remarkable capabilities in diverse image restoration tasks, encompassing deblurring, super-resolution, inpainting, and JPEG artifact removal. Notably, it achieves this through a zero-shot learning approach, effectively utilizing the generative priors embedded within a pre-trained model. \cite{ilvr, snips, ddrm, ddnm, dps, diffPIR, ccdf, mcg, pgdm, repaint, GDP, jpegddrm}. Numerous image restoration approaches employ conditional sampling to generate the desired restored image. This involves starting with pure noise and gradually denoising the image using diffusion models while concurrently pushing the data in the direction that minimizes data fidelity \cite{dps, pgdm, mcg}, hence minimizing (\ref{eqn:datafid}).

In this work, our primary objective extends beyond mere pixel-level data-fidelity to encompass perceptual data-fidelity. There have been numerous studies \cite{chen1994perceptual,jam2021r,lukac2017perceptual, wang2018perceptual} on image restoration that consider perceptual information. In IR, the improvement of perceptual information is essential for images, and hence, for performance \cite{yang2020fidelity, ma2022rectified,li2016perceptual}. Convolution operator and Fourier transform are two representative operators that handle perceptual information in images \cite{dpdmms, sfunet, fddgid, fuoli2021fourier}. These two operators are associated with spatial and frequency perceptual features, respectively \cite{ayyoubzadeh2021high, shao2023uncertainty}.

Drawing inspiration from these observations propose {\name}, a spatial-and-frequency-aware restoration method for images using diffusion model. {\name} is constructed by refining pixel-wise data fidelity using upsampling and Fourier Transform to enhance perceptual quality. Spatial-frequency-aware priors are integrated into the diffusion process in {\name}, allowing it to capture both low-level and high-level image features during restoration. We show that our method achieves significantly better performance than other zero-shot diffusion model-based image restoration methods.
\section{Background}
\label{sec:backgroud}

\begin{figure*}[t!]
  \centering
    \centerline{{\includegraphics[width=0.98\linewidth]{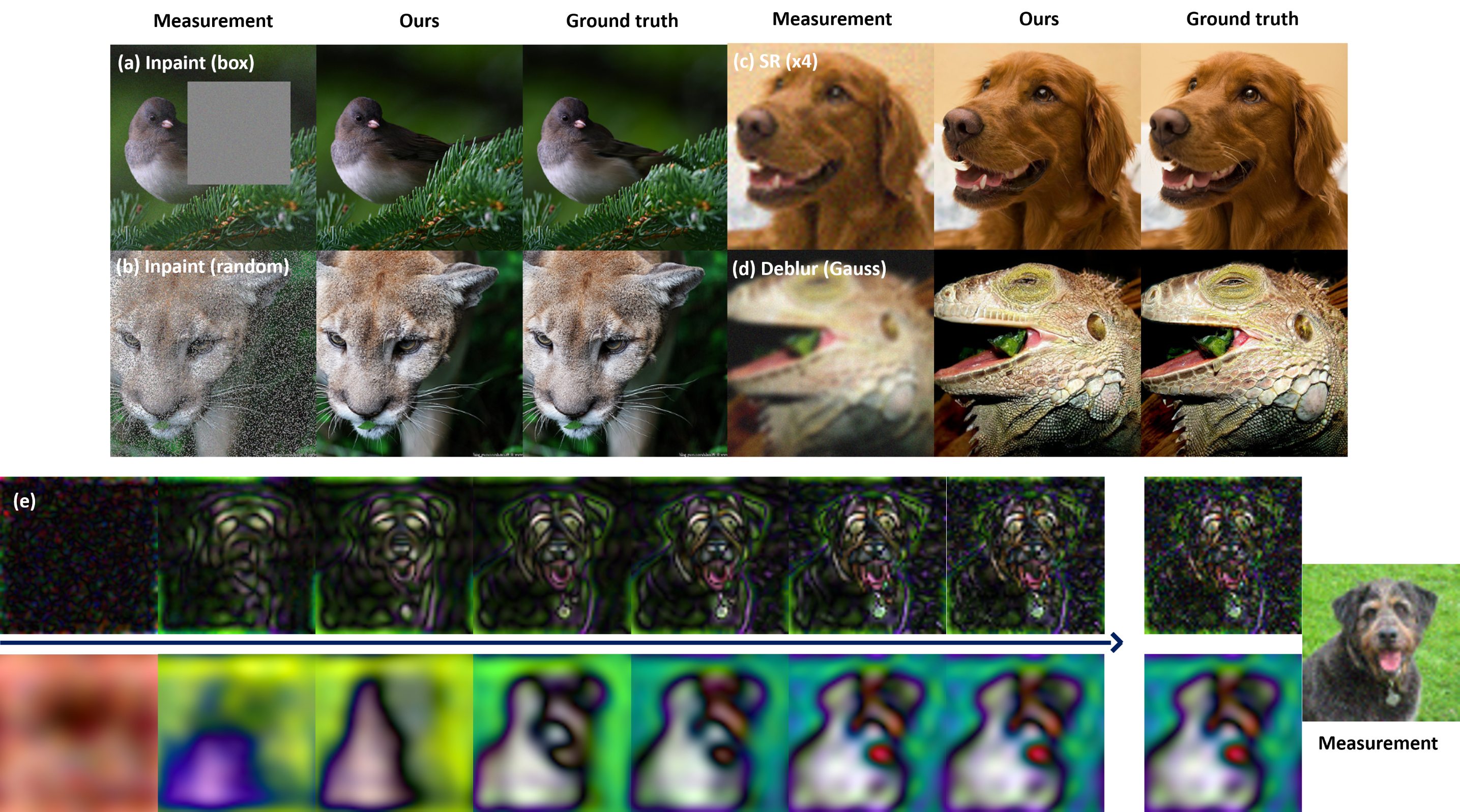}}}
  \caption{Examples and visual explanations of our method's functionality. (a)-(d): Results of the image restoration tasks: box-type inpainting, random-type inpainting, Gaussian deblurring and super resolution, respectively.
  (e): The first row illustrates the sequential changes in $A \xzt$ after applying high-pass filtering, leading to the final filtered image of $\y$, while the second row presents the low-pass counterparts.}
  \label{fig:intro_img}
\end{figure*}

\subsection{Score-based Diffusion Models}

The diffusion process represents the corruption of a clean data point by adding noise. This process is mathematically represented using SDE \cite{scoresde}, which captures the dynamic evolution of the data point as noise is gradually injected. The SDE governing the forward diffusion process is given by:
\begin{equation} \label{eqn:forwardsde}
    d\x = \bs{f}(\x,t)dt + g(t)d\w\, ,
\end{equation}
where $\bs{f}:\R^d\times[0,1]\rightarrow\R^d$ and $g:[0,1]\rightarrow\R$ are drift and diffusion coefficient respectively and $\w$ is the standard Wiener process.

The associated reverse SDE \cite{anderson} is expressed as: 
\begin{equation}
    d\x = \left[ \bs{f}(\x,t) -g^2(t)\nabla_{\x} \log p (\x) \right]dt + g(t)d\bar{\w}\, .
\end{equation}
Here, $dt$ is an infinitesimal negative time step and $\bar\w$ is the standard Wiener process running backward in time. The score function $\nabla_{\x} \log p_t (\x)$ can be approximated by a neural network $\s_{\theta}(\x,t)$ with score-matching \cite{ncsn, scoresde} objective:  
\begin{equation} \label{score_loss}
        \Eb_t \Big[ \lambda_t \Eb_{\x(0)}  \Eb_{\x(t)| \x(0)} \big[ 
                ||\ \s_{\theta}(\x(t), t) -\nabla_{\x(t)} \log p_{0t}(\x(t)|\x(0))\ ||_2^2\big] \Big],
\end{equation}

In this study, we follow DDPM \cite{ddpm}, which can be interpreted as a discretization of the VP-SDE: 
\begin{equation} \label{VPSDE}
    d\x =-\frac{\beta (t)}{2}\x dt + \sqrt{\beta (t)}d\w \, ,
\end{equation}
where $\beta (t)$ is a noise schedule. For the discretization step, we adhere to the notational conventions defined by DDPM \cite{ddpm}: $\x_i = \x(i/T)$, $\beta_i = \beta (i/T)$, $\alpha_i = 1-\beta_i$, and $\bar{\alpha}_i = \prod_{k=1}^i \alpha_i$ for $i=0,\, 1,\, \cdots,\, T$.
%-------------------------------------------------------------------------
\subsection{Image Restoration by Conditional Diffusion}
An Image Restoration is to recover an original image $\x_0\in\R^n$ from a distorted version $\y\in\R^m$. In other words, we search for $\x_0$ such that
\begin{equation}
    \y = \A \x_0 + \n \, ,
\end{equation}
where $\y$ is a given noisy measurement degraded by a linear operator $\A:\R^n\rightarrow\R^m$ and a Gaussian noise $\n\sim\Nc(\boldsymbol{0},\sigma^2\I)$. A conventional method for tackling this challenge entails optimizing the following objective function:
\begin{equation} \label{eqn:opt}
    \argmin{\x} \norm{\y-\A\x}_2^2 + \lambda \mathcal{R}(\x)\, .
\end{equation}
The first term of (\ref{eqn:opt}) is the data-fidelity term, which quantifies the pixel-level dissimilarities between the measurement and the distorted version of the generated image. Whereas the second term is regularization term.

Sampling of $\x_0$ given $\y$ can be performed using the conditional generation of diffusion model \cite{scoresde}. The corresponding reverse SDE of (\ref{VPSDE}) is given as follows:
\begin{equation}
    d\x = \left[-\frac{\beta (t)}{2}\x -\beta(t) \nabla_{\x} \log p_t (\x|\y)\right] dt + \sqrt{\beta (t)}d\bar{\w} \, .
\end{equation}
Adopting the convention of representing $\x_t$ the value of $\x$ at $t$, it follows from the Bayes' rule that the conditional score can be further decomposed into two terms: 
\begin{equation}
    \gradxt \log p_t(\x_t|\y) = \gradxt \log p_t(\x_t) + \gradxt \log p_t(\y|\x_t)\, .
\end{equation}
For the unconditional score $\gradxt \log p_t(\x_t)$, a pre-trained score approximator $\s_\theta(\x_t,t)$ can be used. \cite{beatgan, isir} estimate the second term by training a model on paired data, whereas Diffusion Posterior Sampling (DPS) \cite{dps} approximates it by $\gradxt \log p (\y|\xzt)$ where $\xzt$ is the estimated posterior mean $\Eb [\x_0 | \x_t]$, and hence 
\begin{equation}
    \gradxt \log p_t(\y|\x_t)\ \simeq -\frac{1}{\sigma^2} \gradxt \norm{\y-\A\xzt}_2^2 \, .
\end{equation}
\section{Proposed Method}

%%%%%%%%%%%%%%%%%%%%%%%%
\begin{figure*}[t!]
  \centering
    \centerline{{\includegraphics[width=0.98\linewidth]{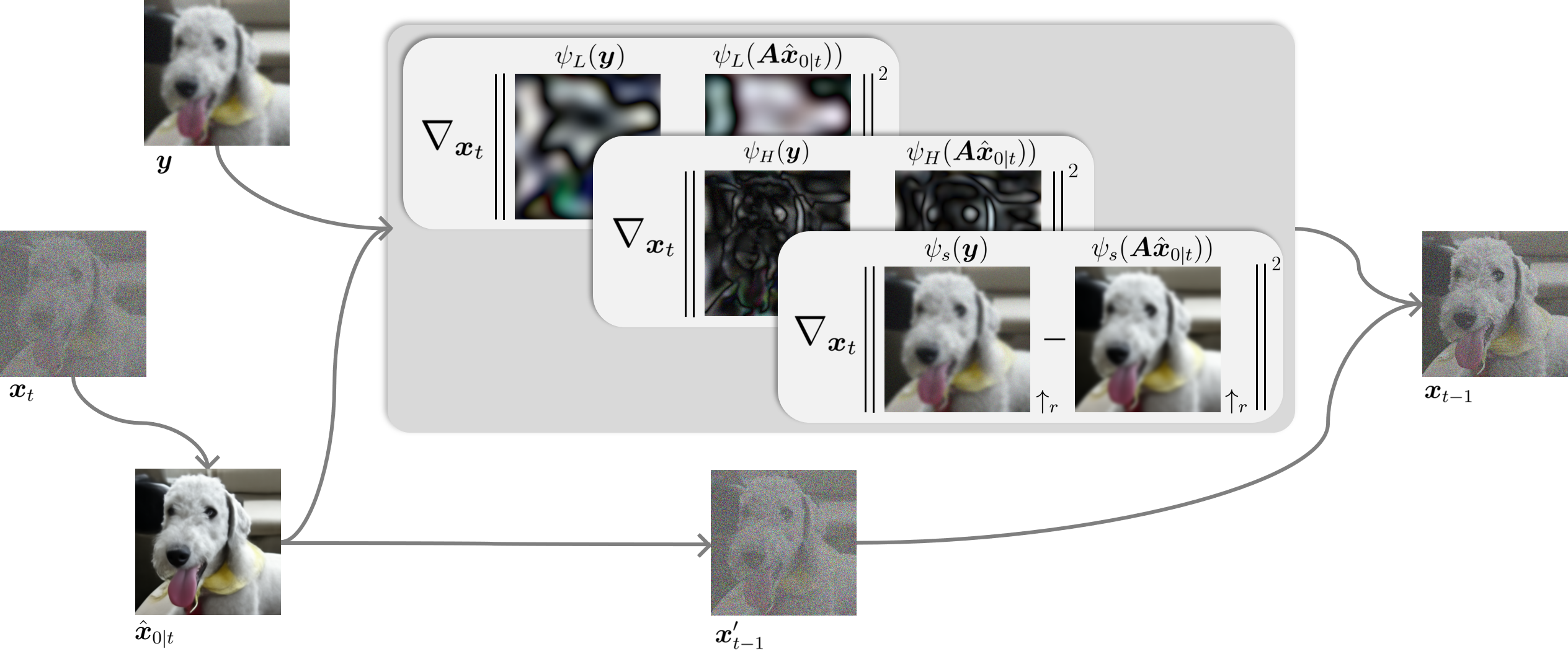}}}
  \caption{{The overview of {\name}.} 
  Starting with the intermediate state $\x_t$, we first generate the unconditional prediction $\xzt$ using the diffusion model. 
  Then we obtain the next state $\x_{t-1}$ by leveraging the loss guidance terms obtained through bicubic upsampling $\psi_{s}$ with scaling factor $r$, high-pass filter $\psi_H$ and the low-pass filter $\psi_L$.
  % the spatial loss guidance term $\gradxt \lVert \psi_{s}(\y) - \psi_{s}(\A \xzt) \rVert$ and the frequency loss guidance terms $\gradxt \lVert \psi_{H}(\y) - \psi_{H}(\A \xzt) \rVert$ and $\gradxt \lVert \psi_{L}(\y) - \psi_{L}(\A \xzt) \rVert$.
  }
  \label{fig:mosikdo}
\end{figure*}
%%%%%%%%%%%%%%%%%%%%%%

\label{sec:proposed_method}

Existing approaches to inverse problems \cite{dps,diffPIR,ddnm,ddrm,mcg} guide the generation process by minimizing the pixel-based data-fidelity term $ \lVert \y - \A\x \rVert _2 ^2$, which does not consider the perceptual features of images. In this paper, we propose a modified data-fidelity term that incorporates spatial and frequency features through upsampling and Fourier transformation, respectively.

\subsection{Modifying Data-fidelity}

With the aim of enhancing the data-fidelity term, we replace term $\norm{\y - \A \x_0}$ with term $\norm{\psi(\y) - \psi(\A \x_0 )}$ utilizing transformation $\psi$, which efficiently captures the perceptual characteristics of images.

To achieve satisfactory outcomes with this technique, it is recommended to carefully control the perturbation of the feasible solutions, aiming for minimal disruption.
Fortunately, the validity of the following equation is well-established under the condition that $\psi$ is injective.
\begin{equation}\label{eqn:argmin}
    \argmin{\x_0} \norm{\y - \A \x_0} = \argmin{\x_0} \norm{ \psi(\y) - \psi( \A \x_0)}
\end{equation}
Thus under the assumption of $\psi$'s injectivity, it is reasonable to expect that feasible solutions of (\ref{eqn:opt}) will be subjected to less disruption. Subsequently, we introduce an injective transformation $\psi$ that effectively extracts perceptual features.

\noindent
\textbf{Spatial feature~} We propose to leverage the effect of upsampling images obtained via interpolation to incorporate not only pixel-level information but also the spatial context of the image. Image interpolation employs convolutional operations to compute the values of newly generated pixels. These values are determined through intricate interactions with neighboring pixels, effectively capturing the spatial context of the image.

 In essence, the values of newly generated pixels can be interpreted as encapsulating crucial information derived from the surrounding spatial image patches.  By doing so, we aim to enrich the overall representation of the image, leading to improved performance in various image processing tasks.
 In this paper, we employ the standard interpolation method, bicubic interpolation. We denote $\psi_{s,r}$ the bicubic upsampling with the ratio $r$. {Note that the bicubic upsampling $\psi_s$ is injective.
\\
\noindent
\textbf{Frequency feature~} To enhance the alignment of the measurement with human perception, we incorporated the frequency domain representation of the image obtained through DFT.
Employing its strength in extracting frequency information, the discrete Fourier transform (DFT) empowers the decomposition of the data-fidelity term into its low-frequency and high-frequency counterparts, providing a more detailed representation of the data.

Denote $\mathcal{F}$ and $\mathcal{F}^{-1}$ as the 2D discrete fourier transform (DFT) and its inverse transform, respectively. 
For an image $f \in {\mathbb{R}}^{M \times N \times C}$, the discrete Fourier transform $\mathcal{F}$ decomposes $f$ by the orthonormal basis with complex coefficients as follows:
\begin{align}
    \left\{ \mathcal{F}(f) \right\} (u,v) = \sum_{i=0}^{M-1} \sum_{j=0}^{N-1} f(i,j)e^{-i 2\pi \left( \frac{ui}{M} + \frac{vj}{N} \right)}
\end{align}
for $(u,v) \in {\mathbb{R}^M\times \mathbb{R}^N}$ .  Our analysis builds upon channel-wise application of the DFT, henceforth represented without the channel dimension for conciseness.

We adopt the ideal highpass filtering and ideal lowpass filtering, denoted by $H$ and $L$, as follows:
\begin{align}
    \{H(F)\}_{u,v} = \begin{cases}
      0 & r(u,v) < r_0 \\
      F_{u,v} & \text{otherwise}
    \end{cases}\label{eq:12}   \\
    \{L(F)\}_{u,v} = \begin{cases}
      F_{u,v} & r(u,v) < r_0 \\
      0 & \text{otherwise}
    \end{cases}  \label{eq:13}
\end{align}
where $r(u,v) = \max \left\{ \left\lvert u - \frac{N}{2} \right\rvert, \left\lvert v - \frac{M}{2} \right\rvert \right\}$. 

Now we consider the transformation $\psi_{f} = ( \psi_{H} ^\top , \psi_{L}^\top )^\top $ where $\psi_H = \mathcal{F}^{-1} \circ H\circ \mathcal{F} $ and $\psi_L = \mathcal{F}^{-1} \circ L \circ \mathcal{F} $. The Parseval's theorem implies that $\psi_{f}$ preserves $2$-norm. Namely, denoting the difference $\y-\A \x_0$ as $\bm{d}$, $\lVert \psi_{f}(\bm{d}) \rVert _2 ^2 = \lVert \bm{d} \rVert _2 ^2$ holds. Hence, $\psi_{f}$ decomposes $\lVert \bm{d} \rVert _2 ^2$ to the high-frequency term $ \lVert   \psi_H (\bm{d}) \rVert _2 ^2$ and the low-frequency term $\lVert  \psi_L (\bm{d})  \rVert _2 ^2 $:
\begin{align}
    \lVert \bm{d} \rVert _2 ^2 = \lVert \psi_{f}(\bm{d}) \rVert _2 ^2 = \lVert  ( \psi_H (\bm{d}),\psi_L (\bm{d}) ) \rVert _2 ^2
    = \lVert   \psi_H (\bm{d}) \rVert _2 ^2 +\lVert  \psi_L (\bm{d})  \rVert _2 ^2
\end{align}

Note that the operator $\psi_f$ is norm-preserving operator, so $\psi_f$ is injective. Also, that minimizing $\lVert \psi_H(\bm{d}) \rVert _2 ^2$ and $\lVert \psi_L(\bm{d}) \rVert _2 ^2$ implies minimizing the difference of high frequency features and low frequency features, respectively. Therefore, through adaptive weighting of the decomposed fidelity terms, we can selectively enhance the high-frequency components that play a critical role in visual perception.

%%%%%%%%%%%%%%%%%%%%%%%%%%%%%%%%%%%%%%%%
\subsection{Theoretical Analysis}

While the approximation of $p_t(\y | \x_t)$ in \cite{dps} is based on the assumption of a normal likelihood, the same principle can be extended to more general conditional probability distributions, represented by $\exp (- l_{\y}(\x_0))$ for some loss function $l_{\y}(\x_0)$ \cite{pnppriors, lgdm}. In the context of this paper, we focus on the specific case of $l_{\psi,\y}(\x_0) = \frac{1}{2\gamma^2} \left\lVert \psi (\y) - \psi( \A \x_0 ) \right\rVert_2 ^2 $

Same as \cite{dps}, we can factorize $p_t(\y | \x_t) $ as follows:
\begin{align}
     p_t(\y|\x_t) &= \int p_t(\y| \x_0, \x_t) p(\x_0 | \x_t) d\x_0 \\
     &= \int p(\y | \x_0 )p(\x_0 | \x_t) d\x_0
\end{align}
Our approach is to modify $p(\y | \x_0 )$ as $p_\psi(\y | \x_0 )$:
\begin{align}
\label{eq:14}
    p_\psi(\y | \x_0) = \frac{1}{Z_\psi} \exp \left[ -\frac{1}{2\gamma^2} \left\lVert \psi (\y) - \psi (\A \x_0) \right\rVert_2 ^2 \right] \, ,
\end{align}
where $Z_\psi$ is a normalizing constant. In that case, we factorize $p_{\psi,t} (\y | \x_t)$ by
\begin{align}
     p_{\psi,t} (\y|\x_t) =\int p_\psi (\y | \x_0 )p(\x_0 | \x_t) d\x_0
\end{align}

Assuming that $\psi$ is a linear operator, the distribution function $p_{\psi}(\y | \x_0)$ is Lipschitz continuous.

\begin{restatable}[]{lemma}{lemmm}
\label{lem}
  The modified conditional probability $p_{\psi}(\y | \x_0)$ defined as (\ref{eq:14})
  is Lipschitz continuous with respect to $\x_0$.

\end{restatable}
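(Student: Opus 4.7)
The plan is to exhibit a uniform bound on $\|\nabla_{\x_0} p_\psi(\y|\x_0)\|$ and then invoke the mean value inequality. Since $\psi$ is linear, the map $\x_0 \mapsto \psi(\y) - \psi(\A\x_0)$ is affine, so $h(\x_0) := \|\psi(\y) - \psi(\A\x_0)\|_2^2$ is a quadratic (hence smooth) function whose gradient I can write explicitly as $\nabla h(\x_0) = -2(\psi\A)^\top(\psi(\y) - \psi(\A\x_0))$, giving the pointwise bound $\|\nabla h(\x_0)\| \le 2\|\psi\A\|_{\mathrm{op}}\,\|\psi(\y)-\psi(\A\x_0)\|_2$.

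Next, I differentiate $p_\psi(\y|\x_0) = Z_\psi^{-1}\exp(-h(\x_0)/(2\gamma^2))$ via the chain rule, obtaining
\begin{equation*}
\|\nabla_{\x_0} p_\psi(\y|\x_0)\| \;\le\; \frac{\|\psi\A\|_{\mathrm{op}}}{Z_\psi\,\gamma^2}\,\|\psi(\y)-\psi(\A\x_0)\|_2 \,\exp\!\bigl(-\|\psi(\y)-\psi(\A\x_0)\|_2^2/(2\gamma^2)\bigr).
\end{equation*}
Setting $t = \|\psi(\y)-\psi(\A\x_0)\|_2 \ge 0$, the scalar function $t\mapsto t\,e^{-t^2/(2\gamma^2)}$ is maximized at $t=\gamma$ with value $\gamma\,e^{-1/2}$, so the Gaussian factor tames the otherwise unbounded linear growth coming from $\nabla h$. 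This yields the uniform bound $\|\nabla_{\x_0} p_\psi(\y|\x_0)\| \le \|\psi\A\|_{\mathrm{op}}/(Z_\psi\,\gamma\sqrt{e})$.

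Finally, since $p_\psi$ is continuously differentiable with bounded gradient on all of $\R^n$, the mean value inequality gives Lipschitz continuity of $\x_0 \mapsto p_\psi(\y|\x_0)$ with constant $L = \|\psi\A\|_{\mathrm{op}}/(Z_\psi\,\gamma\sqrt{e})$. The only genuinely delicate step is this cancellation between the linear growth of $\nabla h$ and the exponential decay of $e^{-h/(2\gamma^2)}$; everything else is the chain rule and elementary calculus, with linearity of $\psi$ being used in exactly one place, to keep $h$ quadratic and its gradient controllable by $\|\psi\A\|_{\mathrm{op}}$.
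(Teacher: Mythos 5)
Your proof is correct and follows essentially the same route as the paper's: a uniform bound on the gradient obtained from the scalar maximization of $t \mapsto t\,e^{-t^2/(2\gamma^2)}$ at $t=\gamma$, followed by the mean value inequality. The only differences are cosmetic --- you differentiate the composite directly and get the marginally tighter (and arguably cleaner) constant $\|\psi\A\|_{\mathrm{op}}/(Z_\psi\,\gamma\sqrt{e})$, whereas the paper bounds the coordinatewise partials of the Gaussian factor separately and reports the product $L_\psi\,\|\A\|/(Z_\psi\,\gamma\sqrt{e})$.
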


Denote the posterior mean of $p(\x_t | \x_0)$ as $\xzt=\mathbb{E}_{p(\x_0|\x_t)}[\x_0]$ where $p(\x_t | \x_0)$ is a DDPM forward process starting from time step 0.   
Then with similar argument in DPS \cite{dps}, the following holds:

%%%%%%%%%%%%%%%%%%%%%%%%%%%%%%%%%%%%%%%%%%%%%%%%%%%%%%

\begin{restatable}[]{theorem}{main}
\label{thm:main}
  For the given measurement $\y$, a linear operator $\psi$, a modified conditional probability $p_\psi (\y | \x_0) \propto \exp ( -l_{\psi, \y} (\x_0) )$, 
we can approximate $p_{\psi,t} (\y | \x_t)$ as follows:
\begin{align}
    p_{\psi,t} (\y | \x_t ) \simeq p_\psi( \y | \xzt )
\end{align}
where the approximation error is bounded by
\begin{align}
    \lvert p_{\psi,t} (\y | \x_t ) - p_\psi( \y | \xzt ) \rvert \le \frac{1}{ {e}^{1/2}Z_{\psi} \gamma }   \cdot L_\psi \cdot \lVert \A \rVert \cdot m_1
\end{align}
where  $m_1 := \int \lVert \x_0 - \xzt \rVert p(\x_0 | \x_t) d\x_0$, $L_\psi$ is a Lipschitz constant of $\psi$ and $\lVert \A \rVert$ is the operator norm associated to the Euclidean norm. 

\end{restatable}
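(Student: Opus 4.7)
The plan is to mimic the argument used for Theorem 1 in DPS, adapted to the modified likelihood $p_\psi(\y\mid\x_0)$. First I would write
\begin{align*}
p_{\psi,t}(\y\mid\x_t) - p_\psi(\y\mid\xzt)
  &= \int \bigl[p_\psi(\y\mid\x_0) - p_\psi(\y\mid\xzt)\bigr]\,p(\x_0\mid\x_t)\,d\x_0,
\end{align*}
using that $p_\psi(\y\mid\xzt)$ is constant in $\x_0$ and $\int p(\x_0\mid\x_t)\,d\x_0=1$. Applying Jensen's inequality to pull the absolute value inside and then invoking Lemma~\ref{lem} gives
\begin{align*}
\lvert p_{\psi,t}(\y\mid\x_t) - p_\psi(\y\mid\xzt)\rvert
  \le \mathrm{Lip}(p_\psi(\y\mid\cdot))\cdot \int \lVert\x_0-\xzt\rVert\,p(\x_0\mid\x_t)\,d\x_0
  = \mathrm{Lip}(p_\psi(\y\mid\cdot))\cdot m_1.
\end{align*}

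The substance of the proof is then to identify the Lipschitz constant explicitly as $\tfrac{L_\psi\lVert\A\rVert}{e^{1/2} Z_\psi \gamma}$. To do this I would write $p_\psi(\y\mid\x_0) = \tfrac{1}{Z_\psi}\,g\bigl(\lVert\psi(\y)-\psi(\A\x_0)\rVert\bigr)$ with $g(s)=e^{-s^2/(2\gamma^2)}$, and compute the gradient
\begin{align*}
\nabla_{\x_0} p_\psi(\y\mid\x_0) = \tfrac{1}{Z_\psi}\,g'(s)\,\nabla_{\x_0}s,
\qquad
s = \lVert\psi(\y)-\psi(\A\x_0)\rVert.
\end{align*}
Since $\psi$ is linear (assumption of the theorem), $\nabla_{\x_0} s$ has norm at most $\lVert\psi\circ\A\rVert_{\mathrm{op}} \le L_\psi\lVert\A\rVert$. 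A one-variable calculus check shows $\sup_{s\ge 0}\lvert g'(s)\rvert = \sup_{s\ge 0}\tfrac{s}{\gamma^2}e^{-s^2/(2\gamma^2)}$ is attained at $s=\gamma$ and equals $\tfrac{1}{\gamma\,e^{1/2}}$. Combining these bounds gives
\begin{align*}
\lVert\nabla_{\x_0}p_\psi(\y\mid\x_0)\rVert \le \frac{L_\psi\lVert\A\rVert}{e^{1/2}Z_\psi\gamma},
\end{align*}
which is exactly the Lipschitz constant needed, and the claimed error bound follows.

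The main obstacle is the explicit constant computation: while the overall structure is essentially a direct transcription of the DPS argument, one must be careful to track how the linearity of $\psi$ converts Lemma~\ref{lem}'s abstract Lipschitz constant into the closed-form expression $L_\psi\lVert\A\rVert/(e^{1/2}Z_\psi\gamma)$, and in particular to verify that the supremum of $\lvert g'\rvert$ really produces the $e^{-1/2}$ factor. Everything else (the rewriting of the difference as an integral, Jensen's inequality, and identifying $m_1$) is routine.
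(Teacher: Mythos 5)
Your proposal is correct and follows essentially the same route as the paper: rewrite the difference as $\int \bigl[p_\psi(\y\mid\x_0)-p_\psi(\y\mid\xzt)\bigr]p(\x_0\mid\x_t)\,d\x_0$, pull the absolute value inside, and invoke the Lipschitz constant $\tfrac{1}{e^{1/2}Z_\psi\gamma}L_\psi\lVert\A\rVert$ from Lemma~\ref{lem} to obtain the $m_1$ bound. Your derivation of that constant via the radial profile $g(s)=e^{-s^2/(2\gamma^2)}$ (maximizing $\lvert g'\rvert$ at $s=\gamma$) is in fact a slightly cleaner computation than the paper's coordinate-wise partial-derivative argument, but it yields the identical constant and is not a genuinely different proof.
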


Furthermore, by the results of Theorem \ref{thm:main}., we can approximate gradient of log likelihood with the analytically tractable term:
\begin{align}
\nabla _{\x_t} \log p_{\psi,t} (\y | \x_t) \simeq \nabla_{\x_t} \log p_\psi (\y | \xzt ).
\label{align:gradient_}
\end{align}

\begin{remark}
Applying {Tweedie's} formula, one can prove that for the case of DDPM sampling, $\xzt$ has the explicit representation:
\begin{align}\label{eqn:xzt}
    \xzt =  \frac{1}{\sqrt{\bar{\alpha}(t)}} \left( \x_t + (1- \bar{\alpha}(t)) \gradxt \log p_t (\x_t) \right)
\end{align}

Note that $p_{\psi,t}(\y, \xzt)$ is intractable in general. Despite the inherent complexity of the term, using the Theorem \ref{thm:main} and the equation \ref{eqn:xzt}, we can approximate it into an explicit form.

\end{remark}

\subsection{\name}

Leveraging the synergistic power of the preceding concepts, we propose \textbf{\name}: Spatial-and-Frequency-aware Restoration method for Images, a novel methodology that tailors the data-fidelity term to spatial and frequency domains, enabling a more comprehensive representation of the underlying perceptual attributes of the images.

In order to consider both spatial and frequency features, we consider the data-fidelity with respect to $\psi = (\psi_{s}^\top,\psi_{f}^\top)^\top$. Since both $\psi_s$ and $\psi_f$ are injective, $\psi$ is also injective. Owing to its injective nature, $\psi$ is expected to cause minimal disruption to feasible solutions of (\ref{eqn:opt}).
In this case, it is represented as follow: 
\begin{align}
    \lVert \psi(\y) - \psi( \A \xzt) \rVert_2 ^2 = &\lVert \psi_s(\y) - \psi_s( \A \xzt) \rVert_2 ^2 \nonumber \\
    + &\lVert \psi_H(\y) - \psi_H( \A \xzt) \rVert_2 ^2 \nonumber \\
    + &\lVert \psi_L(\y) - \psi_L( \A \xzt) \rVert_2 ^2.
\end{align}

To enhance the algorithm's stability, in practice, we fix $\tau$ and set $\psi_s$ to identity for first $T-\tau$ iterations where $T$ is the total number of iterations. Additionally, To optimize its performance, we carefully adjusted the weights of the three data-fidelity terms: spatial-aware term $\rho_t^s$, high-frequency term $\rho_t^H$, and low-frequency term $\rho_t^L$.
A detailed algorithmic formulation of {\name} is presented in Algorithm \ref{alg:sf}. Choices of operators and hyperparameters are in Appendix. For the visual representation of {\name}, please refer to Figure \ref{fig:mosikdo}.

\begin{algorithm}[!t]
\caption{\name}
\begin{algorithmic}[1]

\Require 
\Statex{Total number of iterations : $T$, measurement : $\y$}
\Statex {Operators : $\psi_{s,r}$, $\psi_f = (\psi_{H}^\top,\psi_{L}^\top)^\top$}
\Statex{Hyperparameters : $\tau$}, ${\{\rho_t ^s, \rho_t ^H, \rho_t ^L \}_{t=1}^T}$, ${\{\tilde\sigma_t\}_{t=1}^T}$

\State{$\x_T \sim \Nc(\bm{0},\I)$}
\For{$t=T$ {\bfseries to} $1$}
    \State {$\psi_{s} \gets \psi_{s,r}$}
  \If{$t > \tau$}
    \State {$\psi_{s} \gets \text{identity}$}
  \EndIf
 \State{$\xzt \gets \frac{1}{\sqrt{\alphabar_t}} \left(\x_t + (1-\alphabar_t ) \s_\theta(\x_t, t) \right)$}
 \If{$t>1$}
    \State {$\z \sim \Nc(\bm{0},\I)$}
 \Else
    \State {$\z=\bm{0}$}
 \EndIf 
 \State{$\x'_{t-1} \gets \frac{\sqrt{\alpha_t}(1-\alphabar_{t-1})}{1-\alphabar_t}\x_t + \frac{\sqrt{\alphabar_{t-1}}\beta_t}{1-\alphabar_t}\xzt + \Tilde{\sigma}_i \z$}
 
  \State{$\Lc_{s} \gets \norm{\psi_{s}(\y) - \psi_{s}(\A (\xzt) ) }_2^2 $}
  \State{$\Lc_{H} \gets \norm{\psi_{H}(\y) - \psi_{H}(\A (\xzt) ) }_2^2 $}\State{$\Lc_{L} \gets \norm{\psi_{L}(\y) - \psi_{L}(\A (\xzt) ) }_2^2 $}

        \State {$\x_{t-1} \gets \x'_{t-1} - \rho_{t}^{s} \nabla_{\x_t} \Lc_{s} -  \rho_{t}^{H} \nabla_{\x_t} \Lc_{H} - \rho_{t}^{L} \nabla_{\x_t} \Lc_{L} $}

\EndFor
\State {\bfseries return} $\x_0$ %$\xzt$

\end{algorithmic}\label{alg:sf}
\end{algorithm}

%%%%%%%%%%%%%% Big table
\begin{table*}[ht]
\centering
\setlength{\tabcolsep}{0.2em} \resizebox{0.95\textwidth}{!}
{% <------ Don't forget this %
% \begin{tabular}{lllllllll@{\hskip 15pt}llllllll}
\begin{tabular}{l@{\hskip 10pt}c@{\hskip 5pt}r@{\hskip 10pt}c@{\hskip 5pt}r@{\hskip 10pt}c@{\hskip 5pt}r@{\hskip 10pt}c@{\hskip 5pt}r@{\hskip 5pt}}
\toprule
% \multirow{2}{*}
\multirow{2}{*}{\textbf{Method}} & \multicolumn{2}{c}{\textbf{Inpaint (random)}}   &
\multicolumn{2}{c}{\textbf{Inpaint (box)}} & \multicolumn{2}{c}{\textbf{Deblur (Gauss)}}
& \multicolumn{2}{c}{\textbf{SR ($\times4$)}}

\\
\cmidrule(lr){2-3}
\cmidrule(lr){4-5}
\cmidrule(lr){6-7}
\cmidrule(lr){8-9}
% \cmidrule(lr){10-11}
% \cmidrule(lr){12-13}
% \cmidrule(lr){14-15}
% \cmidrule(lr){16-17}
 & \multicolumn{1}{c}{LPIPS $\downarrow$} & \multicolumn{1}{c}{FID $\downarrow$} & \multicolumn{1}{c}{LPIPS $\downarrow$} & \multicolumn{1}{c}{FID $\downarrow$} & \multicolumn{1}{c}{LPIPS $\downarrow$} & \multicolumn{1}{c}{FID $\downarrow$} & \multicolumn{1}{c}{LPIPS $\downarrow$} & \multicolumn{1}{c}{FID $\downarrow$} 
% & {LPIPS $\downarrow$} & {FID $\downarrow$} & {LIPIS $\downarrow$} & {FID $\downarrow$} & {LPIPS $\downarrow$} & {FID $\downarrow$} & {LPIPS $\downarrow$} & {FID $\downarrow$} 
\\
\midrule
DPS \cite{dps} 
& 0.350 & 15.809 & 0.350 & 57.584 & 0.398 & 49.480 & {0.324} & {41.090}
\\
DiffPIR \cite{diffPIR} 
& {0.141} & {15.216}  & {0.255} & {47.210} & {0.336} & {39.502} & 0.351 & 44.176
\\
PnP-ADMM \cite{pnp-admm}
& 0.414 & 78.639 & 0.395 & 125.608 & 0.501 & 101.900 & 0.389 & 66.539
\\
ILVR \cite{ilvr}
& 0.352 & 48.419 & 0.315 & 61.083 & 0.477 & 80.369 & 0.441 & 74.364
\\
% \midrule
% \cmidrule(lr){1-9} %article ~\cite{ bib} 
\cmidrule{1-9}
% \morecmidrules\cmidrule{1-9}
{\name} (ours)
& \textbf{0.124} & \textbf{10.477} & \underline{0.204} & \textbf{38.160} & \textbf{0.311} & \textbf{34.460} & \underline{0.296} & \textbf{31.258}
\\
% \cmidrule{1-9}
% \morecmidrules\cmidrule{1-9}
% \hline
% \midrule
% \
% \rowcolor{Gray}
{\name}-spatial (ours)
& \underline{0.127} & \underline{11.317} & 0.214 & 40.668 & \underline{0.312} & \underline{34.705} & 0.307 & 33.315
\\
% \\
% \rowcolor{Gray}
{\name}-freq. (ours)
& 0.129 & 12.319 & \textbf{0.199} & \underline{38.358} & 0.317 & 35.160 & \textbf{0.295} & \underline{31.284}
\\
% \midrule
\bottomrule
\end{tabular}
}
\caption{
Quantitative evaluation of image restoration task with Gaussian noise ($\sigma=0.025$) on ImageNet 256$\times$256-1k validation dataset. We compare our method with other zero-shot IR methods. We compute the metrics LPIPS and FID for various tasks. \textbf{Bold}: Best, \underline{under}: second best. (The ranking was done before the rounding)
}
% \vspace{-0.5cm}
\label{tab:imagenet_quantitative}
\end{table*}

\begin{table*}[ht]
\centering
\setlength{\tabcolsep}{0.2em} \resizebox{0.95\textwidth}{!}
{% <------ Don't forget this %
% \begin{tabular}{lllllllll@{\hskip 15pt}llllllll}
\begin{tabular}{l@{\hskip 10pt}c@{\hskip 5pt}r@{\hskip 10pt}c@{\hskip 5pt}r@{\hskip 10pt}c@{\hskip 5pt}r@{\hskip 10pt}c@{\hskip 5pt}r@{\hskip 5pt}}
\toprule
% \multirow{2}{*}
\multirow{2}{*}{\textbf{Method}}   & \multicolumn{2}{c}{\textbf{Inpaint (random)}}&
\multicolumn{2}{c}{\textbf{Inpaint (box)}} & \multicolumn{2}{c}{\textbf{Deblur (Gauss)}}
& \multicolumn{2}{c}{\textbf{SR ($\times4$)}}

\\
\cmidrule(lr){2-3}
\cmidrule(lr){4-5}
\cmidrule(lr){6-7}
\cmidrule(lr){8-9}
% \cmidrule(lr){10-11}
% \cmidrule(lr){12-13}
% \cmidrule(lr){14-15}
% \cmidrule(lr){16-17}
 & \multicolumn{1}{c}{LPIPS $\downarrow$} & \multicolumn{1}{c}{FID $\downarrow$} & \multicolumn{1}{c}{LPIPS $\downarrow$} & \multicolumn{1}{c}{FID $\downarrow$} & \multicolumn{1}{c}{LPIPS $\downarrow$} & \multicolumn{1}{c}{FID $\downarrow$} & \multicolumn{1}{c}{LPIPS $\downarrow$} & \multicolumn{1}{c}{FID $\downarrow$} 
% & {LPIPS $\downarrow$} & {FID $\downarrow$} & {LIPIS $\downarrow$} & {FID $\downarrow$} & {LPIPS $\downarrow$} & {FID $\downarrow$} & {LPIPS $\downarrow$} & {FID $\downarrow$} 
\\
\midrule
DPS \cite{dps} 
& {0.099} & {12.766} & {0.147} & {20.885} & {0.211} & 23.152 & {0.200} & \underline{21.060}
\\
DiffPIR \cite{diffPIR} 
&  0.130 & 18.788 & 0.199 & 22.963 & 0.213 & {22.124} & 0.236 & 25.192
\\
PnP-ADMM \cite{pnp-admm} 
& 0.466 & 119.409 & 0.451 & 155.291 & 0.392 & 69.767 & 0.290 & 50.679
\\
ILVR \cite{ilvr} 
& 0.245 & 39.202 & 0.272 & 32.640 & 0.316 & 51.153 & 0.315 & 49.227
\\
% \midrule
% \cmidrule(lr){1-9} %article ~\cite{ bib} 
\cmidrule{1-9}
% \morecmidrules\cmidrule{1-9}
{\name} (ours)
& \textbf{0.089} & \textbf{9.014} & \textbf{0.106} & \textbf{12.486} & \textbf{0.203} & \textbf{22.046} & \textbf{0.200} & \textbf{20.977}
\\
% \cmidrule{1-9}
% \morecmidrules\cmidrule{1-9}
% \hline
% \midrule
% \
% \rowcolor{Gray}
{\name}-spatial (ours)
& 0.098 & 10.954 & 0.130 & 14.338 & 0.206 & 22.567 & 0.204 & 21.938
\\
% \\
% \rowcolor{Gray}
{\name}-freq. (ours)
& \underline{0.091} & \underline{9.800} & \underline{0.126} & \underline{13.000} & \underline{0.203} & \underline{22.083} & \underline{0.200} & {21.242} 
\\
% \midrule
\bottomrule
\end{tabular}
}
\caption{
Quantitative evaluation of image restoration task with Gaussian noise ($\sigma=0.025$) on FFHQ 256$\times$256-1k validation dataset. We compare our method with other zero-shot IR methods. We compute the metrics LPIPS \cite{lpips} and FID \cite{fid} for various tasks. % which represents the perceptual and distributional similarity, respectively. 
\textbf{Bold}: Best, \underline{under}: second best. (The ranking was done before the rounding)
}
% \vspace{-0.5cm}
\label{tab:ffhq_quantitative}
\end{table*}
%%%%%%%%%%%%%%%%%%%%%%%%%%%%%%%%%%%%%%%%%%%%%%%%%%%

\section{Experiments}
\label{sec:experiments}

\subsection{Implementation Details}
To benchmark the proposed method against existing approaches, we conduct a comparative study using ImageNet $256\times 256$ \cite{imagenet} and FFHQ $256\times 256$ \cite{ffhq} datasets. For each dataset, we evaluate $1$k validation images. We leveraged the pre-trained diffusion models for ImageNet and FFHQ datasets taken from \cite{dhariwal2021diffusion} and \cite{dps}, respectively, without any adjustments. All images are normalized to the range $[0,1]$. 

We conducted experiments using four different degradation tasks. For inpainting, we utilize two mask type; box-type mask and random-type mask. For random-type mask, we mask out $92$\% of the total pixels, and for box-type mask, we mask out $128 \times 128$ box region randomly, following \cite{dps}. For gaussian deblur, we use $61\times61$ gaussian blur kernel with standard deviation of $3.0$. Lastly, For super-resolution, bicubic downsampling is performed. For all tasks, we add gaussian noise to the measurement with standard deviation of $0.025$. Additional experimental details are provided in Appendix.

\subsection{Quantitative Experiments}

To objectively assess the perceptual similarity between two images, we employ two widely recognized metrics: Fréchet Inception Distance (FID) and Learned Perceptual Image Patch Similarity (LPIPS). For comprehensive evaluation, additional objective metrics such as peak signal-to-noise ratio (PSNR) and structural similarity index (SSIM) are presented in the Appendix.

We test our approaches with the following methods:
Denoising diffusion models for plug-and-play image restoration(DiffPIR) \cite{diffPIR}, Diffusion posterior sampling for general noisy inverse problems(DPS) \cite{dps}, Plug-and-play alternating direction method of multipliers(PnP-ADMM) \cite{pnp-admm} using DnCNN \cite{zhang2017beyond} instead of proximal mappings, Iterative latent variable refinement(ILVR) \cite{ilvr}. Although ILVR only deals with super-resolution task, we adopted projections onto convex sets (POCS) method \cite{dps} for inpainting \cite{scoresde} and Gaussian deblurring task. In the case of DiffPIR, we used the case with NFEs of $100$ for the experiments. We additionally evaluate {\name}-spatial and {\name}-frequency, which are {\name} methods with  $\psi_f=0$ and $\psi_s=0$ respectively. To ensure a fair comparison, same pre-trained score function are used for all diffusion model-based methods in the experiments.

We present the quantitative results for ImageNet dataset in Table \ref{tab:imagenet_quantitative} and results for FFHQ dataset in Table \ref{tab:ffhq_quantitative}. Our method establishes a new state-of-the-art benchmark for image restoration performance, consistently outperforming other zero-shot based methods across both datasets and all image restoration tasks.
In particular, our method achieved significantly better results than previous methods on the ImageNet dataset, where the data prior is more complex and conditional guiding is more important.
Furthermore, the utilization of either spatial or frequency alone is sufficient for our method to outperform existing methods, demonstrating our approach's superior capabilities and effectiveness.

\subsection{Qualitative Experiments}
\begin{figure*}[t]
  \centering
    \centerline{{\includegraphics[width=0.95\linewidth]{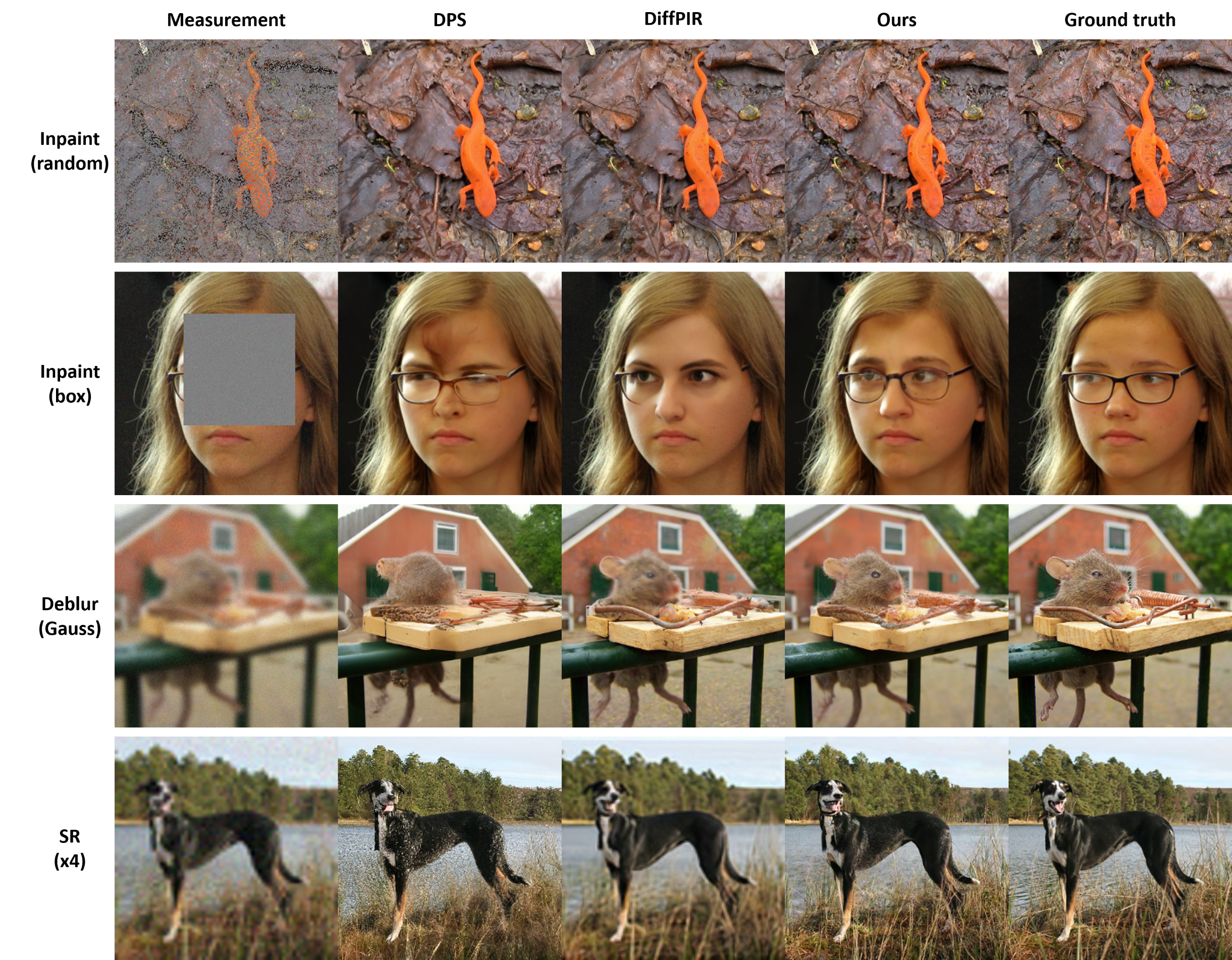}}}
  \caption{Qualitative results of image restoration. We establish the efficacy of {\name} in restoring images across a variety of tasks.
  }
  \label{fig:results}
\end{figure*}
Figure \ref{fig:intro_img} illustrates the superior performance of our proposed approach. Also, figure \ref{fig:results} provides a comprehensive comparison of {\name} against DPS and DiffPIR on a variety of IR tasks, encompassing inpainting with random and box masks, Gaussian deblurring, and super-resolution ($\times 4$). Our method consistently surpasses the benchmarks, generating high-quality reconstructions that exhibit remarkable perceptual context alignment. \\
For random-type inpainting, previous methods also generate sufficiently good images, but our method better restores semantic features. 
A comparison of the first row of Fig \ref{fig:results} with results obtained from other methods clearly highlights the superior performance of our method in generating highly detailed and realistic representations of the surrounding environment and the lizard's skin.
Also, despite the inherent complexities of box inpainting, our approach successfully generates realistic and seamlessly integrated glasses occluded by the mask, resulting in natural-looking images with imperceptible boundaries between the masked and unmasked regions.
In addition, our approach successfully deblurs the measurements, leading to the faithful reconstruction of intricate features, such as the eyes and fur texture.
Moreover, in the context of super-resolution applications, our method excels at reconstructing high-quality images that maintain the integrity of both foreground objects and background elements.

Overall, {\name} can effectively capture and preserve fine-grained texture information, resulting in more realistic and visually appealing images.
\section{Conclusion}
\label{sec:conclusion}

This paper presents {\textbf{\name}}, a novel diffusion model-based image restoration approach that incorporates spatial and frequency information into the data fidelity term, effectively enhancing the restoration performance. By leveraging both spatial and frequency via bicubic upsampling and Fourier transformation, {\textbf{\name}} achieves state-of-the-art results on a variety of image restoration benchmarks, outperforming existing methods.

Despite the remarkable performance of our proposed methodology, the application of the transformation inevitably induces perturbations to the feasible solutions due to the influence of the prior term. A comprehensive analysis of the solution perturbations can strengthen the theoretical foundation of our methodology.

\bibliographystyle{unsrtnat}
\bibliography{references} 

\begin{thebibliography}{42}
\providecommand{\natexlab}[1]{#1}
\providecommand{\url}[1]{\texttt{#1}}
\expandafter\ifx\csname urlstyle\endcsname\relax
  \providecommand{\doi}[1]{doi: #1}\else
  \providecommand{\doi}{doi: \begingroup \urlstyle{rm}\Url}\fi

\bibitem[Rudin et~al.(1992)Rudin, Osher, and Fatemi]{rudin}
Leonid~I Rudin, Stanley Osher, and Emad Fatemi.
\newblock Nonlinear total variation based noise removal algorithms.
\newblock \emph{Physica D: nonlinear phenomena}, 60\penalty0 (1-4):\penalty0 259--268, 1992.

\bibitem[Benning and Burger(2018)]{datafidreg}
Martin Benning and Martin Burger.
\newblock Modern regularization methods for inverse problems.
\newblock \emph{Acta numerica}, 27:\penalty0 1--111, 2018.

\bibitem[Ho et~al.(2020)Ho, Jain, and Abbeel]{ddpm}
Jonathan Ho, Ajay Jain, and Pieter Abbeel.
\newblock Denoising diffusion probabilistic models.
\newblock \emph{Advances in neural information processing systems}, 33:\penalty0 6840--6851, 2020.

\bibitem[Song et~al.(2021)Song, Sohl-Dickstein, Kingma, Kumar, Ermon, and Poole]{scoresde}
Yang Song, Jascha Sohl-Dickstein, Diederik~P Kingma, Abhishek Kumar, Stefano Ermon, and Ben Poole.
\newblock Score-based generative modeling through stochastic differential equations.
\newblock In \emph{International Conference on Learning Representations}, 2021.
\newblock URL \url{https://openreview.net/forum?id=PxTIG12RRHS}.

\bibitem[Dhariwal and Nichol(2021{\natexlab{a}})]{beatgan}
Prafulla Dhariwal and Alexander Nichol.
\newblock Diffusion models beat gans on image synthesis.
\newblock \emph{Advances in neural information processing systems}, 34:\penalty0 8780--8794, 2021{\natexlab{a}}.

\bibitem[Choi et~al.(2021)Choi, Kim, Jeong, Gwon, and Yoon]{ilvr}
Jooyoung Choi, Sungwon Kim, Yonghyun Jeong, Youngjune Gwon, and Sungroh Yoon.
\newblock Ilvr: Conditioning method for denoising diffusion probabilistic models.
\newblock \emph{arXiv preprint arXiv:2108.02938}, 2021.

\bibitem[Kawar et~al.(2021)Kawar, Vaksman, and Elad]{snips}
Bahjat Kawar, Gregory Vaksman, and Michael Elad.
\newblock {SNIPS}: Solving noisy inverse problems stochastically.
\newblock \emph{Advances in Neural Information Processing Systems}, 34:\penalty0 21757--21769, 2021.

\bibitem[Kawar et~al.(2022{\natexlab{a}})Kawar, Elad, Ermon, and Song]{ddrm}
Bahjat Kawar, Michael Elad, Stefano Ermon, and Jiaming Song.
\newblock Denoising diffusion restoration models.
\newblock \emph{Advances in Neural Information Processing Systems}, 35:\penalty0 23593--23606, 2022{\natexlab{a}}.

\bibitem[Wang et~al.(2022)Wang, Yu, and Zhang]{ddnm}
Yinhuai Wang, Jiwen Yu, and Jian Zhang.
\newblock Zero-shot image restoration using denoising diffusion null-space model.
\newblock \emph{arXiv preprint arXiv:2212.00490}, 2022.

\bibitem[Chung et~al.(2023)Chung, Kim, Mccann, Klasky, and Ye]{dps}
Hyungjin Chung, Jeongsol Kim, Michael~Thompson Mccann, Marc~Louis Klasky, and Jong~Chul Ye.
\newblock Diffusion posterior sampling for general noisy inverse problems.
\newblock In \emph{The Eleventh International Conference on Learning Representations}, 2023.
\newblock URL \url{https://openreview.net/forum?id=OnD9zGAGT0k}.

\bibitem[Zhu et~al.(2023)Zhu, Zhang, Liang, Cao, Wen, Timofte, and Gool]{diffPIR}
Yuanzhi Zhu, Kai Zhang, Jingyun Liang, Jiezhang Cao, Bihan Wen, Radu Timofte, and Luc~Van Gool.
\newblock Denoising diffusion models for plug-and-play image restoration.
\newblock In \emph{IEEE Conference on Computer Vision and Pattern Recognition Workshops (NTIRE)}, 2023.

\bibitem[Chung et~al.(2022{\natexlab{a}})Chung, Sim, and Ye]{ccdf}
Hyungjin Chung, Byeongsu Sim, and Jong~Chul Ye.
\newblock Come-closer-diffuse-faster: Accelerating conditional diffusion models for inverse problems through stochastic contraction.
\newblock In \emph{Proceedings of the IEEE/CVF Conference on Computer Vision and Pattern Recognition}, pages 12413--12422, 2022{\natexlab{a}}.

\bibitem[Chung et~al.(2022{\natexlab{b}})Chung, Sim, Ryu, and Ye]{mcg}
Hyungjin Chung, Byeongsu Sim, Dohoon Ryu, and Jong~Chul Ye.
\newblock Improving diffusion models for inverse problems using manifold constraints.
\newblock \emph{Advances in Neural Information Processing Systems}, 35:\penalty0 25683--25696, 2022{\natexlab{b}}.

\bibitem[Song et~al.(2022)Song, Vahdat, Mardani, and Kautz]{pgdm}
Jiaming Song, Arash Vahdat, Morteza Mardani, and Jan Kautz.
\newblock Pseudoinverse-guided diffusion models for inverse problems.
\newblock In \emph{International Conference on Learning Representations}, 2022.

\bibitem[Lugmayr et~al.(2022)Lugmayr, Danelljan, Romero, Yu, Timofte, and Van~Gool]{repaint}
Andreas Lugmayr, Martin Danelljan, Andres Romero, Fisher Yu, Radu Timofte, and Luc Van~Gool.
\newblock Repaint: Inpainting using denoising diffusion probabilistic models.
\newblock In \emph{Proceedings of the IEEE/CVF Conference on Computer Vision and Pattern Recognition}, pages 11461--11471, 2022.

\bibitem[Fei et~al.(2023)Fei, Lyu, Pan, Zhang, Yang, Luo, Zhang, and Dai]{GDP}
Ben Fei, Zhaoyang Lyu, Liang Pan, Junzhe Zhang, Weidong Yang, Tianyue Luo, Bo~Zhang, and Bo~Dai.
\newblock Generative diffusion prior for unified image restoration and enhancement.
\newblock In \emph{Proceedings of the IEEE/CVF Conference on Computer Vision and Pattern Recognition}, pages 9935--9946, 2023.

\bibitem[Kawar et~al.(2022{\natexlab{b}})Kawar, Song, Ermon, and Elad]{jpegddrm}
Bahjat Kawar, Jiaming Song, Stefano Ermon, and Michael Elad.
\newblock Jpeg artifact correction using denoising diffusion restoration models.
\newblock \emph{arXiv preprint arXiv:2209.11888}, 2022{\natexlab{b}}.

\bibitem[Chen and Ford(1994)]{chen1994perceptual}
Hong Chen and Gary~E Ford.
\newblock Perceptual wiener filtering for image restoration.
\newblock In \emph{Proceedings of 1994 28th Asilomar Conference on Signals, Systems and Computers}, volume~1, pages 218--222. IEEE, 1994.

\bibitem[Jam et~al.(2021)Jam, Kendrick, Drouard, Walker, Hsu, and Yap]{jam2021r}
Jireh Jam, Connah Kendrick, Vincent Drouard, Kevin Walker, Gee-Sern Hsu, and Moi~Hoon Yap.
\newblock R-mnet: A perceptual adversarial network for image inpainting.
\newblock In \emph{Proceedings of the IEEE/CVF Winter Conference on Applications of Computer Vision}, pages 2714--2723, 2021.

\bibitem[Lukac(2017)]{lukac2017perceptual}
Rastislav Lukac.
\newblock \emph{Perceptual digital imaging: methods and applications}.
\newblock CRC Press, 2017.

\bibitem[Wang et~al.(2018)Wang, Xu, Wang, and Tao]{wang2018perceptual}
Chaoyue Wang, Chang Xu, Chaohui Wang, and Dacheng Tao.
\newblock Perceptual adversarial networks for image-to-image transformation.
\newblock \emph{IEEE Transactions on Image Processing}, 27\penalty0 (8):\penalty0 4066--4079, 2018.

\bibitem[Yang et~al.(2020)Yang, Wang, Fang, Wang, and Liu]{yang2020fidelity}
Wenhan Yang, Shiqi Wang, Yuming Fang, Yue Wang, and Jiaying Liu.
\newblock From fidelity to perceptual quality: A semi-supervised approach for low-light image enhancement.
\newblock In \emph{Proceedings of the IEEE/CVF conference on computer vision and pattern recognition}, pages 3063--3072, 2020.

\bibitem[Ma et~al.(2022)Ma, Liu, and Wu]{ma2022rectified}
Haichuan Ma, Dong Liu, and Feng Wu.
\newblock Rectified wasserstein generative adversarial networks for perceptual image restoration.
\newblock \emph{IEEE Transactions on Pattern Analysis and Machine Intelligence}, 45\penalty0 (3):\penalty0 3648--3663, 2022.

\bibitem[Li et~al.(2016)Li, Yan, Fang, Wang, Tang, and Qian]{li2016perceptual}
Leida Li, Ya~Yan, Yuming Fang, Shiqi Wang, Lu~Tang, and Jiansheng Qian.
\newblock Perceptual quality evaluation for image defocus deblurring.
\newblock \emph{Signal Processing: Image Communication}, 48:\penalty0 81--91, 2016.

\bibitem[Yang et~al.(2023)Yang, Zhou, Feng, and Wang]{dpdmms}
Xingyi Yang, Daquan Zhou, Jiashi Feng, and Xinchao Wang.
\newblock Diffusion probabilistic model made slim.
\newblock In \emph{Proceedings of the IEEE/CVF Conference on Computer Vision and Pattern Recognition}, pages 22552--22562, 2023.

\bibitem[Yuan et~al.(2023)Yuan, Li, Wang, Yang, Lin, Liu, and Wang]{sfunet}
Xin Yuan, Linjie Li, Jianfeng Wang, Zhengyuan Yang, Kevin Lin, Zicheng Liu, and Lijuan Wang.
\newblock Spatial-frequency u-net for denoising diffusion probabilistic models.
\newblock \emph{arXiv preprint arXiv:2307.14648}, 2023.

\bibitem[Sheng et~al.(2022)Sheng, Liu, Cao, Shen, and Zhang]{fddgid}
Zehua Sheng, Xiongwei Liu, Si-Yuan Cao, Hui-Liang Shen, and Huaqi Zhang.
\newblock Frequency-domain deep guided image denoising.
\newblock \emph{IEEE Transactions on Multimedia}, 2022.

\bibitem[Fuoli et~al.(2021)Fuoli, Van~Gool, and Timofte]{fuoli2021fourier}
Dario Fuoli, Luc Van~Gool, and Radu Timofte.
\newblock Fourier space losses for efficient perceptual image super-resolution.
\newblock In \emph{Proceedings of the IEEE/CVF International Conference on Computer Vision}, pages 2360--2369, 2021.

\bibitem[Ayyoubzadeh and Wu(2021)]{ayyoubzadeh2021high}
Seyed~Mehdi Ayyoubzadeh and Xiaolin Wu.
\newblock High frequency detail accentuation in cnn image restoration.
\newblock \emph{IEEE Transactions on Image Processing}, 30:\penalty0 8836--8846, 2021.

\bibitem[Shao et~al.(2023)Shao, Qiao, Meng, and Zuo]{shao2023uncertainty}
Mingwen Shao, Yuanjian Qiao, Deyu Meng, and Wangmeng Zuo.
\newblock Uncertainty-guided hierarchical frequency domain transformer for image restoration.
\newblock \emph{Knowledge-Based Systems}, 263:\penalty0 110306, 2023.

\bibitem[Anderson(1982)]{anderson}
Brian~D.O. Anderson.
\newblock {Reverse-time diffusion equation models}.
\newblock \emph{Stochastic Processes and their Applications}, 12\penalty0 (3):\penalty0 313--326, May 1982.
\newblock URL \url{https://ideas.repec.org/a/eee/spapps/v12y1982i3p313-326.html}.

\bibitem[Song and Ermon(2019)]{ncsn}
Yang Song and Stefano Ermon.
\newblock Generative modeling by estimating gradients of the data distribution.
\newblock In \emph{Advances in Neural Information Processing Systems}, pages 11895--11907, 2019.

\bibitem[Saharia et~al.(2022)Saharia, Ho, Chan, Salimans, Fleet, and Norouzi]{isir}
Chitwan Saharia, Jonathan Ho, William Chan, Tim Salimans, David~J Fleet, and Mohammad Norouzi.
\newblock Image super-resolution via iterative refinement.
\newblock \emph{IEEE Transactions on Pattern Analysis and Machine Intelligence}, 45\penalty0 (4):\penalty0 4713--4726, 2022.

\bibitem[Graikos et~al.(2022)Graikos, Malkin, Jojic, and Samaras]{pnppriors}
Alexandros Graikos, Nikolay Malkin, Nebojsa Jojic, and Dimitris Samaras.
\newblock Diffusion models as plug-and-play priors.
\newblock \emph{Advances in Neural Information Processing Systems}, 35:\penalty0 14715--14728, 2022.

\bibitem[Song et~al.(2023)Song, Zhang, Yin, Mardani, Liu, Kautz, Chen, and Vahdat]{lgdm}
Jiaming Song, Qinsheng Zhang, Hongxu Yin, Morteza Mardani, Ming-Yu Liu, Jan Kautz, Yongxin Chen, and Arash Vahdat.
\newblock Loss-guided diffusion models for plug-and-play controllable generation.
\newblock In Andreas Krause, Emma Brunskill, Kyunghyun Cho, Barbara Engelhardt, Sivan Sabato, and Jonathan Scarlett, editors, \emph{Proceedings of the 40th International Conference on Machine Learning}, volume 202 of \emph{Proceedings of Machine Learning Research}, pages 32483--32498. PMLR, 23--29 Jul 2023.
\newblock URL \url{https://proceedings.mlr.press/v202/song23k.html}.

\bibitem[Chan et~al.(2016)Chan, Wang, and Elgendy]{pnp-admm}
Stanley~H Chan, Xiran Wang, and Omar~A Elgendy.
\newblock Plug-and-play admm for image restoration: Fixed-point convergence and applications.
\newblock \emph{IEEE Transactions on Computational Imaging}, 3\penalty0 (1):\penalty0 84--98, 2016.

\bibitem[Zhang et~al.(2018)Zhang, Isola, Efros, Shechtman, and Wang]{lpips}
Richard Zhang, Phillip Isola, Alexei~A Efros, Eli Shechtman, and Oliver Wang.
\newblock The unreasonable effectiveness of deep features as a perceptual metric.
\newblock In \emph{Proceedings of the IEEE conference on computer vision and pattern recognition}, pages 586--595, 2018.

\bibitem[Heusel et~al.(2017)Heusel, Ramsauer, Unterthiner, Nessler, and Hochreiter]{fid}
Martin Heusel, Hubert Ramsauer, Thomas Unterthiner, Bernhard Nessler, and Sepp Hochreiter.
\newblock Gans trained by a two time-scale update rule converge to a local nash equilibrium.
\newblock \emph{Advances in neural information processing systems}, 30, 2017.

\bibitem[Deng et~al.(2009)Deng, Dong, Socher, Li, Li, and Fei-Fei]{imagenet}
Jia Deng, Wei Dong, Richard Socher, Li-Jia Li, Kai Li, and Li~Fei-Fei.
\newblock Imagenet: A large-scale hierarchical image database.
\newblock In \emph{2009 IEEE conference on computer vision and pattern recognition}, pages 248--255. Ieee, 2009.

\bibitem[Karras et~al.(2019)Karras, Laine, and Aila]{ffhq}
Tero Karras, Samuli Laine, and Timo Aila.
\newblock A style-based generator architecture for generative adversarial networks.
\newblock In \emph{Proceedings of the IEEE/CVF conference on computer vision and pattern recognition}, pages 4401--4410, 2019.

\bibitem[Dhariwal and Nichol(2021{\natexlab{b}})]{dhariwal2021diffusion}
Prafulla Dhariwal and Alexander Nichol.
\newblock Diffusion models beat gans on image synthesis.
\newblock \emph{Advances in neural information processing systems}, 34:\penalty0 8780--8794, 2021{\natexlab{b}}.

\bibitem[Zhang et~al.(2017)Zhang, Zuo, Chen, Meng, and Zhang]{zhang2017beyond}
Kai Zhang, Wangmeng Zuo, Yunjin Chen, Deyu Meng, and Lei Zhang.
\newblock Beyond a gaussian denoiser: Residual learning of deep cnn for image denoising.
\newblock \emph{IEEE transactions on image processing}, 26\penalty0 (7):\penalty0 3142--3155, 2017.

\end{thebibliography}

\clearpage
\setcounter{page}{1}
\maketitlesupplementary

\def\thesection{\Alph{section}}
\setcounter{section}{0}
\section{Proofs}

\begin{lemma1}{}
\label{lem:lemma_prime}
  The modified conditional probability $p_{\psi}(\y | \x_0)$ defined as (\ref{eq:14})
  is Lipschitz continuous with respect to $\x_0$.
  
\end{lemma1}
\begin{proof}
    Let $h(\z) := \frac{1}{Z_\psi} \exp \left[ -\frac{1}{2\gamma^2} \lVert \z - \psi(\y)  \rVert^2 _2 \right]$, then the modified conditional distribution $p_{\psi}(\y | \x_0)$ is a composition of $h$, $\psi$ and $\A$. Since $\psi$ and $\A$ are both linear, it suffices to show that $h$ is Lipschitz continuous. Since
    \begin{align}
        \frac{\partial}{\partial \z_i} h(\z) &= \frac{1}{Z_\psi} \cdot \left( -\frac{\z_i - \psi(\y)_i}{\gamma^2} \right) \cdot \exp \left[ -\frac{1}{2\gamma^2} \lVert \z - \psi(\y)  \rVert^2 _2 \right],   \\ 
        \frac{\partial^2}{\partial \z_i ^2} h(\z) &= \frac{1}{Z_\psi \gamma^2} \cdot \left[ \frac{(\z_i - \psi(\y)_i)^2}{\gamma^2} -1 \right] \cdot \exp \left[ - \frac{1}{2\gamma^2}\lVert \z - \psi(\y)  \rVert^2 _2 \right],
    \end{align}
    for each $\z_i$, $\left | \frac{\partial h}{\partial \z_i }\right |$ attains maximum $\frac{1}{e^{1/2}Z_\psi \gamma}$ at $\z_i = \psi(\y)_i \pm \gamma$. Thus we have 
    \begin{align}
        \lvert h(\z_1) - h(\z_2) \rvert &\le \sup_{\z} \lVert \nabla h(\z) \rVert_{\infty} \cdot \lVert \z_1 - \z_2 \rVert \\
        &\le \sup_{\z} \  \max_i\left | \frac{\partial h}{\partial \z_i}\right | \cdot \lVert \z_1 - \z_2 \rVert \\
        &\le \frac{1}{e^{1/2}Z_\psi \gamma}  \lVert \z_1 - \z_2 \rVert\, ,
    \end{align}
    which shows that $h$ is Lipschitz continuous and consequently $p_{\psi}(\y | \x_0)$ is also Lipschitz continuous. Note that the Lipschitz constant of $p_{\psi}(\y | \x_0)$ is the product of all Lipschitz constants of $h$, $\psi$ and $\A$.
\end{proof}
Employing the aforementioned above lemma, we can derive the following theorem:

\main*
\begin{proof} 
    It follows from the above lemma that $p_{\psi}(\y | \x_0)$ is Lipschitz continuous with Lipschitz constant $\frac{1}{ {e}^{1/2}Z_{\psi} \gamma } \cdot L_{\psi} \cdot \norm{\A}$. Therefore, the error bound is 
    \begin{align}
        \lvert p_{\psi,t} (\y | \x_t ) - p_\psi( \y | \xzt ) \rvert  &= \snorm{\int p_{\psi}(\y|\x_0)p(\x_0|\x_t)d\x_0 - p_{\psi}(\y|\xzt) } \\
        &= \snorm{ \int \left( p_{\psi} (\y | \x_0 ) - p_\psi( \y | \xzt )\right)p(\x_0|\x_t) d\x_0} \\
        &\le \int \snorm{\left( p_{\psi} (\y | \x_0 ) - p_\psi( \y | \xzt )\right)}p(\y|\xzt)d\x_0 \\
        &\le \frac{1}{ {e}^{1/2}Z_{\psi} \gamma }   \cdot L_{\psi} \cdot \norm{\A} \cdot \int \norm{\x_0 - \xzt}p(\x_0|\x_t)d\x_0 \\
        &=\frac{1}{ {e}^{1/2}Z_{\psi} \gamma } \cdot L_{\psi} \cdot \norm{\A} \cdot m_1\, ,
    \end{align}
    where the second last line comes from the Lipschitz continuity.
\end{proof}
    
\clearpage

\section{Experimental Details}
\subsection{Hyperparameter setting}

We provide a comprehensive overview of the hyperparameter configurations utilized for our algorithm in each problem setting. For spatial feature operator $\psi_{s,r}$, bicubic upsampling with a factor of $r=4$ has been used throughout all experiments. Table \ref{tab:hyper-param-imagenet} and Table \ref{tab:hyper-param-ffhq} shows the values for the hyperparameters for each task on ImageNet and FFHQ respectively.
%%%%%%%%%%%%%%%%%%%%%%%%%%%%%%%%%%%%%%%%%%%%%%%%%%%%%
\begin{table}[ht]
\centering
\setlength{\tabcolsep}{0.2em} %\resizebox{0.4\textwidth}{!}
\setlength{\extrarowheight}{1pt}
{% <------ Don't forget this %
% \begin{tabular}{lllllllll@{\hskip 15pt}llllllll}
\begin{tabular}
% {l@{\hskip 10pt}c@{\hskip 5pt}r@{\hskip 10pt}c@{\hskip 5pt}r@{\hskip 10pt}c@{\hskip 5pt}r@{\hskip 10pt}c@{\hskip 5pt}r@{\hskip 5pt}}
{c@{\hskip 10pt}c@{\hskip 10pt}c@{\hskip 10pt}c@{\hskip 10pt}c@{\hskip 10pt}}
\toprule
% \multirow{2}{*} 
% \multicolumn{1}{c}{\textbf{Hyper-param}}  
 & \multicolumn{1}{c}{\textbf{Inpainting (random)}}&
\multicolumn{1}{c}{\textbf{Inpainting (box)}} & \multicolumn{1}{c}{\textbf{Deblur (Gauss)}}
& \multicolumn{1}{c}{\textbf{SR ($\times4$)}}

\\
% \\
\midrule
$r_0$ & 5 & 5 & 4 & 5
\\
$\tau$ & 0.7 & 0.5 & 0.5 & 0.7
\\
$\rho_{t>\tau}^H$ & 0.0 / $\sqrt{\Lc_H}$  & 0.125 / $\sqrt{\Lc_H}$ & 0.0125 / $\sqrt{\Lc_H}$ & 0.25 / $\sqrt{\Lc_H}$
\\
$\rho_{t>\tau}^L$ & 0.0 / $\sqrt{\Lc_L}$ & 0.125 / $\sqrt{\Lc_L}$ & 0.025 / $\sqrt{\Lc_L}$ & 0.25 / $\sqrt{\Lc_L}$
\\
$\rho_{t>\tau}^s$& 0.25 / $\sqrt{\Lc_s}$ & 0.125 / $\sqrt{\Lc_s}$ & 0.075 / $\sqrt{\Lc_s}$ & 0.025 / $\sqrt{\Lc_s}$
\\
$\rho_{t\le\tau}^H$& 0.125 / $\sqrt{\Lc_H}$ & 0.625 / $\sqrt{\Lc_H}$ & 0.3 / $\sqrt{\Lc_H}$ & 1.25 / $\sqrt{\Lc_H}$
\\
$\rho_{t\le\tau}^L$& 0.025 / $\sqrt{\Lc_L}$ & 0.125 / $\sqrt{\Lc_L}$ & 0.15 / $\sqrt{\Lc_L}$ & 0.25 / $\sqrt{\Lc_L}$
\\
$\rho_{t\le\tau}^s$& 0.35 / $\sqrt{\Lc_s}$ & 0.125 / $\sqrt{\Lc_s}$ & 0.225 / $\sqrt{\Lc_s}$ & 0.0 / $\sqrt{\Lc_s}$
\\

\bottomrule
\end{tabular}
}
\caption{
Hyperparameters of image restoration tasks on ImageNet 256$\times$256 dataset.
}
% \vspace{-0.5cm}
\label{tab:hyper-param-imagenet}
\end{table}
%%%%%%%%%%%%%%%%%%%%%%%%%%%%%%%%%%%%%%%%%%%%%%%%%%%%%
\begin{table}[ht]
\centering
\setlength{\tabcolsep}{0.2em} %\resizebox{0.4\textwidth}{!}
\setlength{\extrarowheight}{1pt}
{% <------ Don't forget this %
% \begin{tabular}{lllllllll@{\hskip 15pt}llllllll}
\begin{tabular}
% {l@{\hskip 10pt}c@{\hskip 5pt}r@{\hskip 10pt}c@{\hskip 5pt}r@{\hskip 10pt}c@{\hskip 5pt}r@{\hskip 10pt}c@{\hskip 5pt}r@{\hskip 5pt}}
{c@{\hskip 10pt}c@{\hskip 10pt}c@{\hskip 10pt}c@{\hskip 10pt}c@{\hskip 10pt}}
\toprule
% \multirow{2}{*} 
% \multicolumn{1}{c}{\textbf{Hyper-param}}  
 & \multicolumn{1}{c}{\textbf{Inpainting (random)}}&
\multicolumn{1}{c}{\textbf{Inpainting (box)}} & \multicolumn{1}{c}{\textbf{Deblur (Gauss)}}
& \multicolumn{1}{c}{\textbf{SR ($\times4$)}}

\\

% \\
\midrule
$r_0$ & 5 & 5 & 5 & 2
\\
$\tau$ & 0.7 & 0.5 & 0.7 & 0.7
\\
$\rho_{t>\tau}^H$ & 0.2 / $\sqrt{\Lc_H}$ & 0.125 / $\sqrt{\Lc_H}$ & 0.25 / $\sqrt{\Lc_H}$ & 0.15 / $\sqrt{\Lc_H}$
\\
$\rho_{t>\tau}^L$ & 0.2 / $\sqrt{\Lc_L}$ & 0.125 / $\sqrt{\Lc_L}$ & 0.25 / $\sqrt{\Lc_L}$ & 0.15 / $\sqrt{\Lc_L}$
\\
$\rho_{t>\tau}^s$& 0.075 / $\sqrt{\Lc_s}$ & 0.05 / $\sqrt{\Lc_s}$ & 0.05 / $\sqrt{\Lc_s}$ & 0.1 / $\sqrt{\Lc_s}$
\\
$\rho_{t\le\tau}^H$& 0.8 / $\sqrt{\Lc_H}$ & 0.75 / $\sqrt{\Lc_H}$  & 1.25 / $\sqrt{\Lc_H}$ & 1.0 / $\sqrt{\Lc_H}$
\\
$\rho_{t\le\tau}^L$& 0.2 / $\sqrt{\Lc_L}$ & 0.375 / $\sqrt{\Lc_L}$ & 0.25 / $\sqrt{\Lc_L}$ & 0.25 / $\sqrt{\Lc_L}$
\\
$\rho_{t\le\tau}^s$& 0.15 / $\sqrt{\Lc_s}$ & 0.1 / $\sqrt{\Lc_s}$ & 0.025 / $\sqrt{\Lc_s}$ & 0.0 / $\sqrt{\Lc_s}$
\\

% \midrule %
% \cmidrule(lr){1-9} %article ~\cite{ bib} 
% \cmidrule{1-5}
% \morecmidrules\cmidrule{1-9}
% 
% \\
% \cmidrule{1-9}
% \morecmidrules\cmidrule{1-9}
% \hline
% \midrule
% \
% \rowcolor{Gray}

% \\
% \\
% \rowcolor{Gray}

% \\
% \midrule
\bottomrule
\end{tabular}
}
\caption{
Hyperparameters of image restoration tasks on FFHQ 256$\times$256 dataset.
}
% \vspace{-0.5cm}
\label{tab:hyper-param-ffhq}
\end{table}
%%%%%%%%%%%%%%%%%%%%%%%%%%%%%%%%%%%%%%%%%%%%%%%%%%%

In addition, for the Super Resolution task employing the FFHQ dataset, we utilize an upsampling operator in place of the identity operator for $\psi_s$.

\subsection{Comparison methods}
%% DPS DiffPIR PnPADMM ILVR
\noindent
\textbf{DPS} The default configuration in DPS \cite{dps} was employed for all experiments, except for a few tasks. Given the enhanced performance of the new setting for both Gaussian deblurring(GB) and box-mask inpainting(IB) tasks on the ImageNet dataset, we opted to adopt this configuration for subsequent experiments. In this regard, we established the hyperparameter as follows:

\begin{align}
    \zeta_{i} = \begin{cases}
      0.15/\norm{\y - \A(\xzt(\x_i)) } & \text{Deblur (Gauss), ImageNet dataset} \\
     0.25/\norm{\y - \A(\xzt(\x_i)) } & \text{Inpainting (box), ImageNet dataset}
    \end{cases}  
\end{align}
\\

\noindent
\textbf{DiffPIR} To maintain consistency in the noise level across our experiments, we employed the official code of DiffPIR (\cite{diffPIR}) and adjusted the $\mathrm{noise\ level}$ parameter to $6.375$.
\\

\noindent
\textbf{PnP-ADMM} We take the \url{scico} library's implementation for our proposes. We set the ADMM penalty parameter $\rho = 0.2$ and $\mathrm{maxiter}=12$. Also, we leverage the pretrained DnCNN denoiser \cite{zhang2017beyond} for proximal mapping.
\\

\noindent
\textbf{ILVR} For SR task, we followed ILVR and for other tasks we adopted projections onto convex sets (POCS) method, as in \cite{dps}, for inpainting \cite{scoresde} and Gaussian deblurring task.

% ablation
%% measurement noise
%% SR degradation
%% Freq. - radius, high/low 
%% scale
%% spat. scale factor
\section{Ablation studies}

\subsection{Measurement Noise}

We will demonstrate the robustness of our proposed method to varying noise levels. We will showcase the effectiveness of our approach under diverse noise conditions, highlighting its resilience against noise interference. 
DiffPIR's reliance on a closed-form solution for conditional guiding grants it an advantage in noiseless scenarios. However, our method surpasses DiffPIR under varying noise levels, demonstrating its superior robustness.

%%%%%%%%%%%%%%%%%%%% small table
\begin{table}[ht]
\centering
\setlength{\tabcolsep}{0.2em} %\resizebox{0.45\textwidth}{!}
{% <------ Don't forget this %
\begin{tabular}{c@{\hskip 10pt}cccc}
\toprule
% \toprule
% \multirow{2}{*}

% \multirow{2}{*}{\textbf{Method}}   & \multicolumn{2}{c}{\textbf{Inpaint (random)}}&
% \multicolumn{2}{c}{\textbf{Inpaint (box)}} & \multicolumn{2}{c}{\textbf{Deblur (gauss)}}
% & \multicolumn{2}{c}{\textbf{Super resolution}}

% \\
% \cmidrule(lr){2-6}
% \cmidrule(lr){4-5}
% \cmidrule(lr){6-7}
% \cmidrule(lr){8-9}
% \cmidrule(lr){10-11}
% \cmidrule(lr){12-13}
% \cmidrule(lr){14-15}
% \cmidrule(lr){16-17}
 % & \multicolumn{1}{c}{LPIPS $\downarrow$} & \multicolumn{1}{c}{FID $\downarrow$} & \multicolumn{1}{c}{LPIPS $\downarrow$} & \multicolumn{1}{c}{FID $\downarrow$} & \multicolumn{1}{c}{LPIPS $\downarrow$} & \multicolumn{1}{c}{FID $\downarrow$} & \multicolumn{1}{c}{LPIPS $\downarrow$} & \multicolumn{1}{c}{FID $\downarrow$} 
\multicolumn{1}{l}{Noise $\sigma$} &\multicolumn{1}{c}{0.0} &\multicolumn{1}{c}{0.025} & \multicolumn{1}{c}{0.05}  & \multicolumn{1}{c}{0.1}
\\
\cmidrule(lr){1-5}
% & {LPIPS $\downarrow$} & {FID $\downarrow$} & {LIPIS $\downarrow$} & {FID $\downarrow$} & {LPIPS $\downarrow$} & {FID $\downarrow$} & {LPIPS $\downarrow$} & {FID $\downarrow$} 
% \midrule
% DPS\cite{dps} 
% & \underline{0.099} & \underline{12.766} & \underline{0.147} & \underline{20.885} & \underline{0.211} & 23.152 & \underline{0.200} & \underline{21.060}
% \\
% DiffPIR\cite{diffPIR} 
% &  0.130 & 18.788 & 0.199 & 22.963 & 0.213 & \underline{22.124} & 0.236 & 25.192
% \\
% PnP-ADMM\cite{pnp-admm} 
% & 0.466 & 119.409 & 0.451 & 155.291 & 0.392 & 69.767 & 0.290 & 50.679
% \\
% ILVR\cite{ilvr} 
% & 0.245 & 39.202 & 0.272 & 32.640 & 0.316 & 51.153 & 0.315 & 49.227
% \\
% \midrule
 %article ~\cite{ bib} 

% \cmidrule(lr){1-6}
DiffPIR
& 0.151 & 0.205 & 0.227 & 0.253
\\
SaFaRI & 0.154 & 0.193 & 0.212  & 0.242
\\
% ours {(freq.)} 
% & 0.091 & 9.800 & 0.126 & 13.000 & 0.203 & 22.083 & 0.200 & 21.242 \\

\bottomrule
\end{tabular}
}
\caption{
LPIPS comparison on measurement noise, FFHQ 256$\times$256 -100 validation dataset, Gaussian deblurring task. 
}
\vspace{-0.5em}
\label{tab:ablation_noise}
\end{table}

\subsection{SR Scaling Factor}
We evaluate our methods on the SR task using the LPIPS metric with various scaling factors. Table \ref{tab:abl:SRfactor} demonstrates that the task becomes increasingly challenging as the scaling factor increases, which aligns with expectations. %%%%%%%%%%%%%%%%%%%% small table
\begin{table}[ht]
\centering
\setlength{\tabcolsep}{0.2em}  %\resizebox{0.40\textwidth}{!}
{% <------ Don't forget this %
\begin{tabular}{c@{\hskip 10pt}cccc}
\toprule
\multicolumn{1}{l}{Scaling Factor} & \multicolumn{1}{c}{$\times 2$} &\multicolumn{1}{c}{$\times 4$} & \multicolumn{1}{c}{$\times 8$} & \multicolumn{1}{c}{$\times 16$}
\\
\cmidrule(lr){1-5}
LPIPS&  0.116 & 0.191 & 0.270 & 0.247\\
% \cmidrule(lr){1-6}
% LPIPS $\downarrow$ (DiffPIR)
% & 0.111  & 0.111 & 0.111 & 0.111 & 0.111
% \\
\bottomrule
\end{tabular}
}
\caption{
LPIPS evaluation of our method across various scaling factors of the SR task on FFHQ 256$\times$256 -100 validation dataset. 
}
\vspace{-0.5em}
\label{tab:abl:SRfactor}
\end{table}

\subsection{Spatial Hyperparameters}
\noindent \textbf{Upsampling Factor}
To investigate the impact of varying upsampling factors $r$ of the bicubic operator $\psi_{s,r}$ on the performance of our method, we evaluated the LPIPS metric across different scaling factors. The results, presented in Table \ref{tab:abl:Spatfactor}, demonstrate a trend between the upsampling factor and LPIPS score.

%%%%%%%%%%%%%%%%%%%% small table
\begin{table}[ht]
\centering
\setlength{\tabcolsep}{0.2em}  %\resizebox{0.40\textwidth}{!}
{% <------ Don't forget this %
\begin{tabular}{c@{\hskip 10pt}cccc}
\toprule
\multicolumn{1}{l}{Upsampling Factor} & \multicolumn{1}{c}{$\times 2$} &\multicolumn{1}{c}{$\times 4$} & \multicolumn{1}{c}{$\times 8$} & \multicolumn{1}{c}{$\times 16$}
\\
\cmidrule(lr){1-5}
LPIPS&  0.193 & 0.193 & 0.194 & 0.195\\
% \cmidrule(lr){1-6}
% LPIPS $\downarrow$ (DiffPIR)
% & 0.111  & 0.111 & 0.111 & 0.111 & 0.111
% \\
\bottomrule
\end{tabular}
}
\caption{
LPIPS evaluation of our method across various upsampling factors of spatial operator $\psi_{s,r}$ for Gaussian Deblurring task on FFHQ 256$\times$256 -100 validation dataset. 
}
\vspace{-0.5em}
\label{tab:abl:Spatfactor}
\end{table}

\subsection{Frequency Hyperparameters } %frequency-aware Hyperparameter
\label{subsection: freq hyperparam}

\noindent
\textbf{Radius $r_0$} \
In order to analyze the frequency context of the predicted measurement, the frequency domain is segmented into high-frequency and low-frequency components. The Fast Fourier Transform (FFT) algorithm is employed to transform the measurement data into the frequency domain. By applying an appropriate frequency shift, the low-frequency components are centered in the transformed image, facilitating the separation of the frequency domain into two distinct regions as illustrated in (\ref{eq:12}) and (\ref{eq:13}).

Parameter $r_0$ plays a pivotal role in governing the emphasis placed on specific frequency components. Consequently, the selection of an appropriate radius parameter $r_0$ is crucial for highlighting the desired fine-grained features. Table \ref{tab:ablation_radius} demonstrates the impact of varying the $r_0$.

\noindent
\textbf{Emphasizing parameters $\rho_t^H$,$\rho_t^L$} \
Beyond the influence of $r_0$, $\rho_t^H$ and $\rho_t^L$ also plays a significant role in determining the relative emphasis placed on high-frequency contextual information compared to low-frequency contextual information. Table \ref{tab:ablation_high}, \ref{tab:ablation_low} illustrate the effect of altering the relative proportions of high-frequency and low-frequency components. 
Insufficient values of $\rho_t^H$ do not guarantee the generation of detailed images (Fig \ref{fig:high_param}). Additionally, excessive values of $\rho_t^L$ results in distorted and corrupted generated images (Fig \ref{fig:low_param}).
%%%%%%%%%%%%%%%%%%%% small table
\begin{table}[ht]
\centering
\setlength{\tabcolsep}{0.2em} %\resizebox{0.45\textwidth}{!}
{% <------ Don't forget this %
\begin{tabular}{c@{\hskip 10pt}ccccc}
\toprule
\multicolumn{1}{c}{$r_0$} & \multicolumn{1}{c}{1} &\multicolumn{1}{c}{2} & \multicolumn{1}{c}{3} & \multicolumn{1}{c}{4} & \multicolumn{1}{c}{5}
\\
\cmidrule(lr){1-6}

SaFaRI&  0.200 & 0.199 & 0.196 & 0.195 & 0.193\\
% \cmidrule(lr){1-6}
% LPIPS $\downarrow$ (DiffPIR)
% & 0.111  & 0.111 & 0.111 & 0.111 & 0.111
% \\
\bottomrule
\end{tabular}
}
\caption{
LPIPS evaluation on
% of our method across various 
radius $r_0$ of the Gaussian Deblurring task on FFHQ 256$\times$256 -100 validation dataset. 
}
\vspace{-0.5em}
\label{tab:ablation_radius}
\end{table}
%%%%%%%%%%%%%%%%%%%% small table
\begin{table}[ht]
\centering
\setlength{\tabcolsep}{0.2em}
% \resizebox{0.45\textwidth}{!}
{% <------ Don't forget this %
\begin{tabular}{c@{\hskip 10pt}ccccc}
\toprule
% \multicolumn{1}{c}{high } & \multicolumn{1}{c}{1} &\multicolumn{1}{c}{2} & \multicolumn{1}{c}{3} & \multicolumn{1}{c}{4} & \multicolumn{1}{c}{5}
% \\
\multicolumn{1}{c}{$\rho_t^H \cdot 4\sqrt{\Lc_H}$  } & \multicolumn{1}{c}{1} &\multicolumn{1}{c}{2} & \multicolumn{1}{c}{3} & \multicolumn{1}{c}{4} & \multicolumn{1}{c}{5}
\\
\cmidrule(lr){1-6}

SaFaRI &  0.220 & 0.207 & 0.201 & 0.196 & 0.193\\

\bottomrule
\end{tabular}
}
\caption{
LPIPS evaluation on $\rho_t^H$, FFHQ 256$\times$256 -100 validation dataset, Gaussian deblurring task with fixed $\rho_t^L$. The case $\rho_t^H \cdot 4\sqrt{\Lc_H} = 5$ represents the optimal case.
}
\vspace{-0.5em}
\label{tab:ablation_high}
\end{table}

%%%%%%%%%%%%%%%%%%%% small table
\begin{table}[ht]
\centering
\setlength{\tabcolsep}{0.2em}
% \resizebox{0.45\textwidth}{!}
{% <------ Don't forget this %
\begin{tabular}{c@{\hskip 10pt}ccccc}
\toprule
% \multicolumn{1}{c}{high } & \multicolumn{1}{c}{1} &\multicolumn{1}{c}{2} & \multicolumn{1}{c}{3} & \multicolumn{1}{c}{4} & \multicolumn{1}{c}{5}
% \\
\multicolumn{1}{c}{$\rho_t^L \cdot 4\sqrt{\Lc_L}$ } & \multicolumn{1}{c}{1} &\multicolumn{1}{c}{2} & \multicolumn{1}{c}{3} & \multicolumn{1}{c}{4} & \multicolumn{1}{c}{5}
\\
\cmidrule(lr){1-6}

SaFaRI & 0.193  & 0.196 & 0.206 & 0.222 & 0.236
\\

\bottomrule
\end{tabular}
}
\caption{
LPIPS evaluation on $\rho_t^L$, FFHQ 256$\times$256 -100 validation dataset, Gaussian deblurring task with fixed $\rho_t^H$. The case $\rho_t^L \cdot 4\sqrt{\Lc_L} = 1$  represents the optimized case.
}
\vspace{-0.5em}
\label{tab:ablation_low}
\end{table}

% further results

\begin{figure*}[ht]
  \centering
    \centerline{{\includegraphics[width=0.5\linewidth]{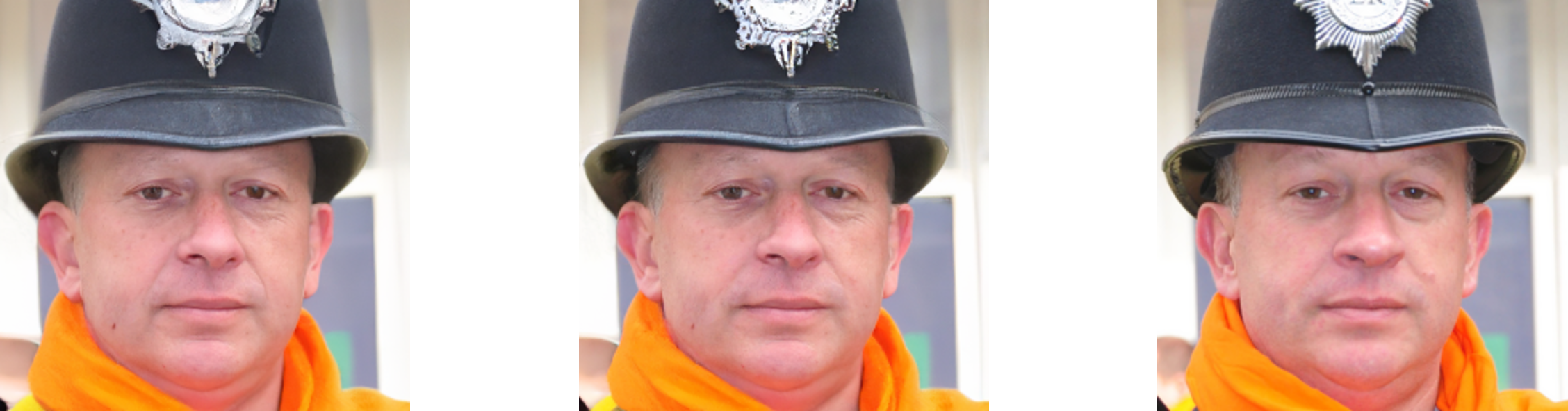}}}
  \caption{The results of SaFaRI, Gaussian blurring under different $\rho_t^H$ configurations. (left) The case $\rho_t^H = 0.25 / \sqrt{\Lc_H}$ (middle) The case $\rho_t^H = 1.25 / \sqrt{\Lc_H}$ (right) Ground Truth. 
  }
  \label{fig:high_param}
\end{figure*}

\begin{figure*}[ht]
  \centering
    \centerline{{\includegraphics[width=0.5\linewidth]{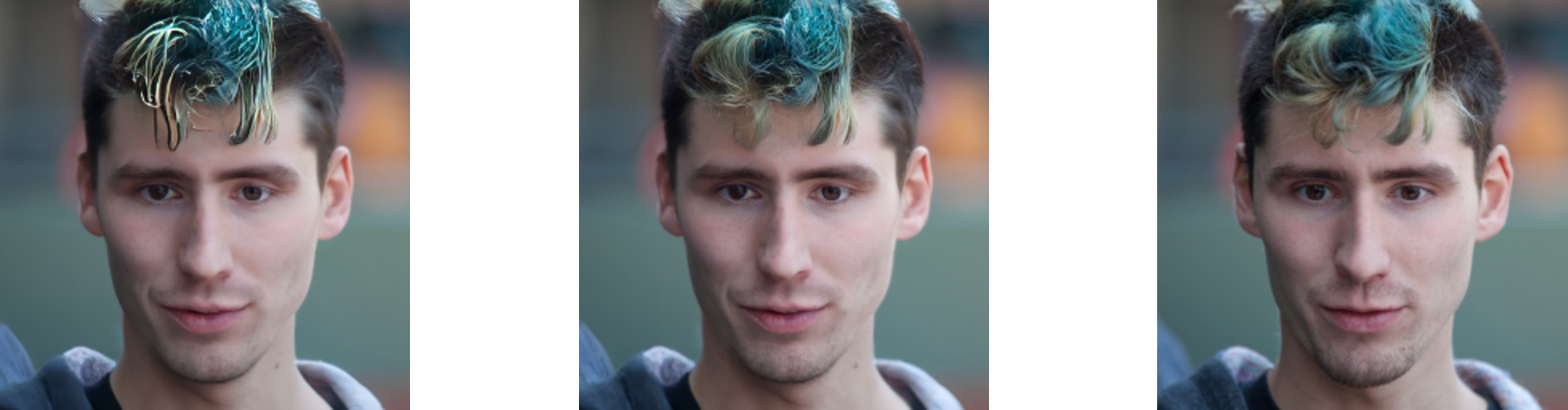}}}
  \caption{The results of SaFaRI, Gaussian blurring under different $\rho_t^H$ configurations. (left) The case $\rho_t^L = 1.25 / \sqrt{\Lc_L}$ (middle) The case $\rho_t^L = 0.25 / \sqrt{\Lc_L}$ (right) Ground Truth.
  }
  \label{fig:low_param}
\end{figure*}

\section{Further Experiments}

%Figs \ref{fig:ffhq_ir, fig:ffhq_ib, fig:ffhq_gd,fig:ffhq_sr,fig:imnt_ir,fig:imnt_ib,fig:imnt_gd,fig:imnt_sr}
We conduct additional quantitative analyses employing the standard metrics, PSNR and SSIM in Table \ref{tab:imagenet_psnr_ssim} and Table \ref{tab:ffhq_psnr_ssim}. Also we present the further experimental results: from Figure \ref{fig:imnt_ir} to Figure\ref{fig:ffhq_sr}. Our method exhibits robust performance across a variety of tasks and datasets, as evidenced by comprehensive empirical evaluations.
%%%%%%%%%%%%%% Big table
\begin{table*}[ht]
\centering
\setlength{\tabcolsep}{0.2em} %\resizebox{0.95\textwidth}{!}
{% <------ Don't forget this %
% \begin{tabular}{lllllllll@{\hskip 15pt}llllllll}
\begin{tabular}{l@{\hskip 10pt}c@{\hskip 5pt}r@{\hskip 10pt}c@{\hskip 5pt}r@{\hskip 10pt}c@{\hskip 5pt}r@{\hskip 10pt}c@{\hskip 5pt}r@{\hskip 5pt}}
\toprule
% \multirow{2}{*}
\multirow{2}{*}{\textbf{Method}} & \multicolumn{2}{c}{\textbf{Inpaint (random)}}   &
\multicolumn{2}{c}{\textbf{Inpaint (box)}} & \multicolumn{2}{c}{\textbf{Deblur (Gauss)}}
& \multicolumn{2}{c}{\textbf{SR ($\times4$)}}

\\
\cmidrule(lr){2-3}
\cmidrule(lr){4-5}
\cmidrule(lr){6-7}
\cmidrule(lr){8-9}
% \cmidrule(lr){10-11}
% \cmidrule(lr){12-13}
% \cmidrule(lr){14-15}
% \cmidrule(lr){16-17}
 & \multicolumn{1}{c}{PSNR $\uparrow$} & \multicolumn{1}{c}{SSIM $\uparrow$} & \multicolumn{1}{c}{PSNR $\uparrow$} & \multicolumn{1}{c}{SSIM $\uparrow$} & \multicolumn{1}{c}{PSNR $\uparrow$} & \multicolumn{1}{c}{SSIM $\uparrow$} & \multicolumn{1}{c}{PSNR $\uparrow$} & \multicolumn{1}{c}{SSIM $\uparrow$} 
% & {LPIPS $\downarrow$} & {FID $\downarrow$} & {LIPIS $\downarrow$} & {FID $\downarrow$} & {LPIPS $\downarrow$} & {FID $\downarrow$} & {LPIPS $\downarrow$} & {FID $\downarrow$} 
\\
\midrule
DPS \cite{dps} 
& 28.699 & 0.875 & 18.472 & 0.667 & 21.134 & 0.532 & 21.898 & {0.601}
\\
DiffPIR \cite{diffPIR} 
& {28.504} & {0.862}  & {18.638} & {0.706} & \textbf{23.453} & {0.618} & \textbf{23.815} & 0.660
\\
PnP-ADMM \cite{pnp-admm}
& 18.467 & 0.584 & 13.310 & 0.579 & 21.788 & 0.595 & 23.444 & \textbf{0.702}
\\
ILVR \cite{ilvr}
& 23.402 & 0.605 & 16.868 & 0.615 & 21.290 & 0.579 & 23.355 & 0.627
\\
% \midrule
% \cmidrule(lr){1-9} %article ~\cite{ bib} 
\cmidrule{1-9}
% \morecmidrules\cmidrule{1-9}
{\name} (ours)
& {28.732} & {0.880} & \textbf{18.887} & \underline{0.778} & {23.064} & \underline{0.642} & {23.703} & {0.680}
\\
% \cmidrule{1-9}
% \morecmidrules\cmidrule{1-9}
% \hline
% \midrule
% \
% \rowcolor{Gray}
{\name}-spatial (ours)
& \underline{28.753} & \underline{0.880} & {18.560} & {0.771} & \underline{23.098} & \textbf{0.643} & {23.646} & {0.672}
\\
% \\
% \rowcolor{Gray}
{\name}-freq. (ours)
& \textbf{28.833} & \textbf{0.881} & \underline{18.855} & \textbf{0.779} & {22.862} & {0.629} & \underline{23.732} & \underline{0.680}
\\
% \midrule
\bottomrule
\end{tabular}
}
\caption{
Quantitative evaluation of image restoration task with Gaussian noise ($\sigma=0.025$) on ImageNet 256$\times$256-1k validation dataset. We compare our method with other zero-shot IR methods. We compute the metrics PSNR and SSIM for various tasks. \textbf{Bold}: Best, \underline{under}: second best. (The ranking was done before the rounding)
}
% \vspace{-0.5cm}
\label{tab:imagenet_psnr_ssim}
\end{table*}

%%%%%%%%%%%%%%%%%%%%%%%%%%%%%%%%%%%%%%%%%%%%%%%%%%%%%
\begin{table*}[ht]
\centering
\setlength{\tabcolsep}{0.2em} %\resizebox{0.95\textwidth}{!}
{% <------ Don't forget this %
% \begin{tabular}{lllllllll@{\hskip 15pt}llllllll}
\begin{tabular}{l@{\hskip 10pt}c@{\hskip 5pt}r@{\hskip 10pt}c@{\hskip 5pt}r@{\hskip 10pt}c@{\hskip 5pt}r@{\hskip 10pt}c@{\hskip 5pt}r@{\hskip 5pt}}
\toprule
% \multirow{2}{*}
\multirow{2}{*}{\textbf{Method}}   & \multicolumn{2}{c}{\textbf{Inpaint (random)}}&
\multicolumn{2}{c}{\textbf{Inpaint (box)}} & \multicolumn{2}{c}{\textbf{Deblur (Gauss)}}
& \multicolumn{2}{c}{\textbf{SR ($\times4$)}}

\\
\cmidrule(lr){2-3}
\cmidrule(lr){4-5}
\cmidrule(lr){6-7}
\cmidrule(lr){8-9}
% \cmidrule(lr){10-11}
% \cmidrule(lr){12-13}
% \cmidrule(lr){14-15}
% \cmidrule(lr){16-17}
 & \multicolumn{1}{c}{PSNR $\uparrow$} & \multicolumn{1}{c}{SSIM $\uparrow$} & \multicolumn{1}{c}{PSNR $\uparrow$} & \multicolumn{1}{c}{SSIM $\uparrow$} & \multicolumn{1}{c}{PSNR $\uparrow$} & \multicolumn{1}{c}{SSIM $\uparrow$} & \multicolumn{1}{c}{PSNR $\uparrow$} & \multicolumn{1}{c}{SSIM $\uparrow$} 
% & {LPIPS $\downarrow$} & {FID $\downarrow$} & {LIPIS $\downarrow$} & {FID $\downarrow$} & {LPIPS $\downarrow$} & {FID $\downarrow$} & {LPIPS $\downarrow$} & {FID $\downarrow$} 
\\
\midrule
DPS \cite{dps} 
& {32.370} & \underline{0.933} & 21.495 & 0.834 & 26.365 & 0.776 & \underline{27.539} & 0.812 
\\
DiffPIR \cite{diffPIR} 
&  31.345 & 0.912 & 21.928 & 0.783 & \textbf{27.420} & \textbf{0.796} & 27.120 & 0.778
\\
PnP-ADMM \cite{pnp-admm} 
& 18.634 & 0.605 & 12.587 & 0.560 & 24.746 & 0.759 & 26.412 & \textbf{0.834}
\\
ILVR \cite{ilvr} 
& 25.504 & 0.768 & 19.986 & 0.677 & 24.672 & 0.754 & 27.287 & 0.769
\\
% \midrule
% \cmidrule(lr){1-9} %article ~\cite{ bib} 
\cmidrule{1-9}
% \morecmidrules\cmidrule{1-9}
{\name} (ours)
& \textbf{32.534} & 0.933 & \textbf{23.492} & \textbf{0.852} & \underline{26.725} & \underline{0.786} & 27.538 & 0.812
\\
% \cmidrule{1-9}
% \morecmidrules\cmidrule{1-9}
% \hline
% \midrule
% \
% \rowcolor{Gray}
{\name}-spatial (ours)
& 32.317 & 0.922 & 22.976 & 0.847 & 26.558 & 0.781 & 27.412 & 0.807
\\
% \\
% \rowcolor{Gray}
{\name}-freq. (ours)
& \underline{32.496} & \textbf{0.934} & \underline{23.247} & \underline{0.848} & 26.709 & 0.786 & \textbf{27.556} & \underline{0.813}
\\
% \midrule
\bottomrule
\end{tabular}
}
\caption{
Quantitative evaluation of image restoration task with Gaussian noise ($\sigma=0.025$) on FFHQ 256$\times$256-1k validation dataset. We compare our method with other zero-shot IR methods. We compute the metrics PSNR and SSIM for various tasks. % which represents the perceptual and distributional similarity, respectively. 
\textbf{Bold}: Best, \underline{under}: second best. (The ranking was done before the rounding)
}
% \vspace{-0.5cm}
\label{tab:ffhq_psnr_ssim}
\end{table*}
% %%%%%%%%%%%%%%%%%%%%%%%%%%%%%%%%%%%%%%%%%%%%%%%%%%%
% %%%%%%%%%%% results, imagenet
\begin{figure*}[t]
  \centering
    \centerline{{\includegraphics[width=0.9\linewidth]{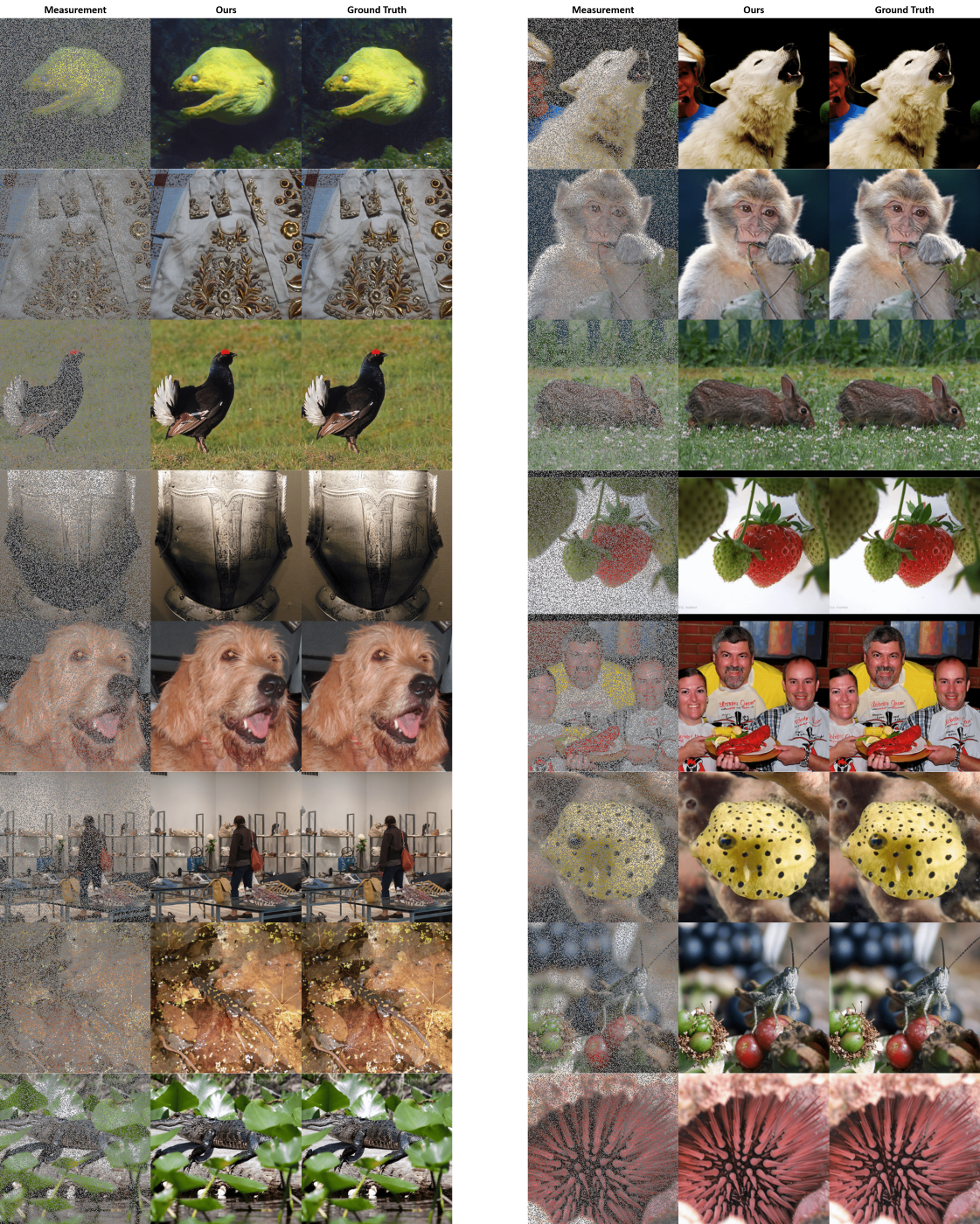}}}
  \caption{ The results of SaFaRI, random-mask inpainting on the ImageNet 256$\times$256 dataset. 
  }
  \label{fig:imnt_ir}
\end{figure*}

\begin{figure*}[t]
  \centering
    \centerline{{\includegraphics[width=0.9\linewidth]{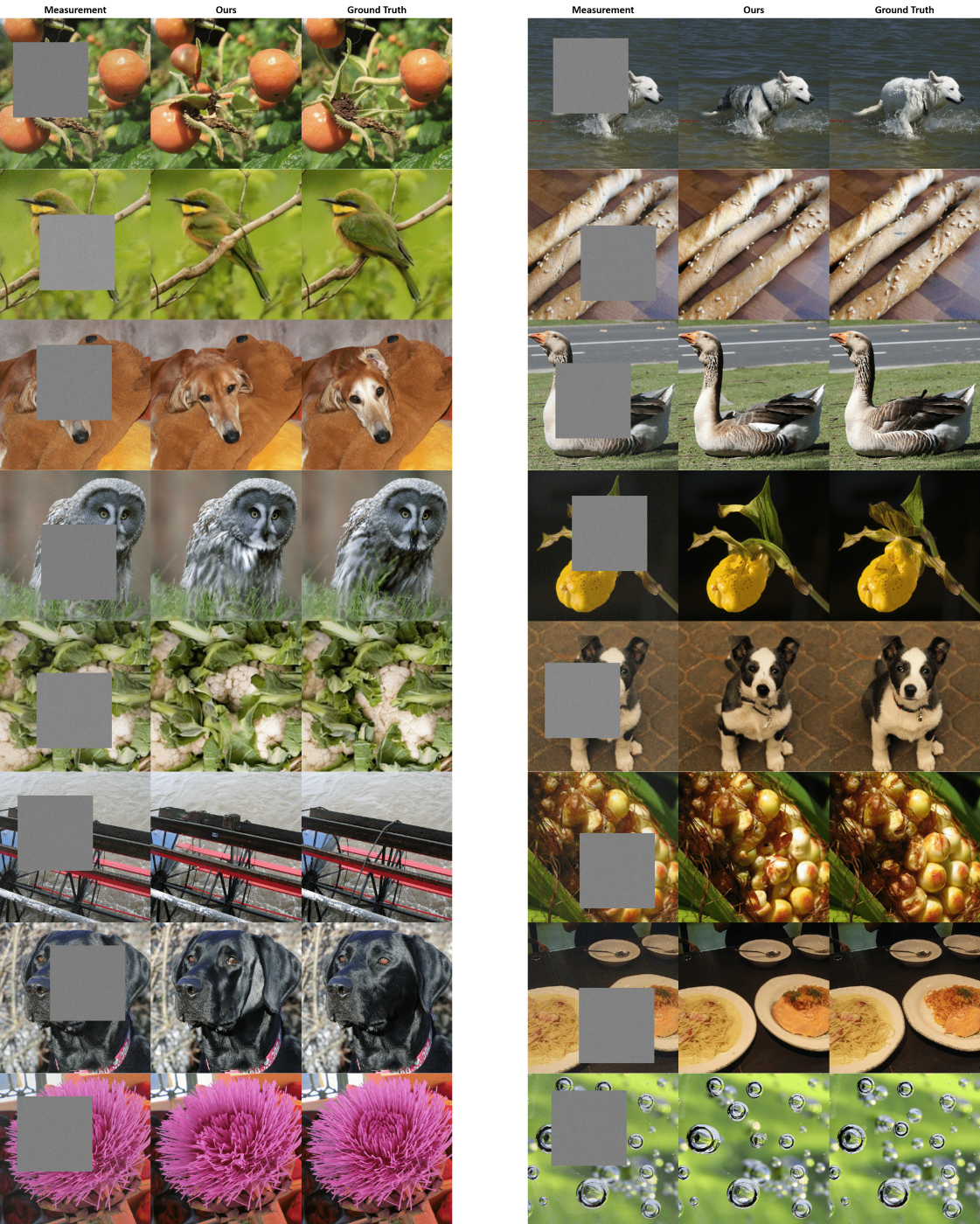}}}
  \caption{The results of SaFaRI, box-mask inpainting on the ImageNet 256$\times$256 dataset.
  }
  \label{fig:imnt_ib}
\end{figure*}

\begin{figure*}[t]
  \centering
    \centerline{{\includegraphics[width=0.9\linewidth]{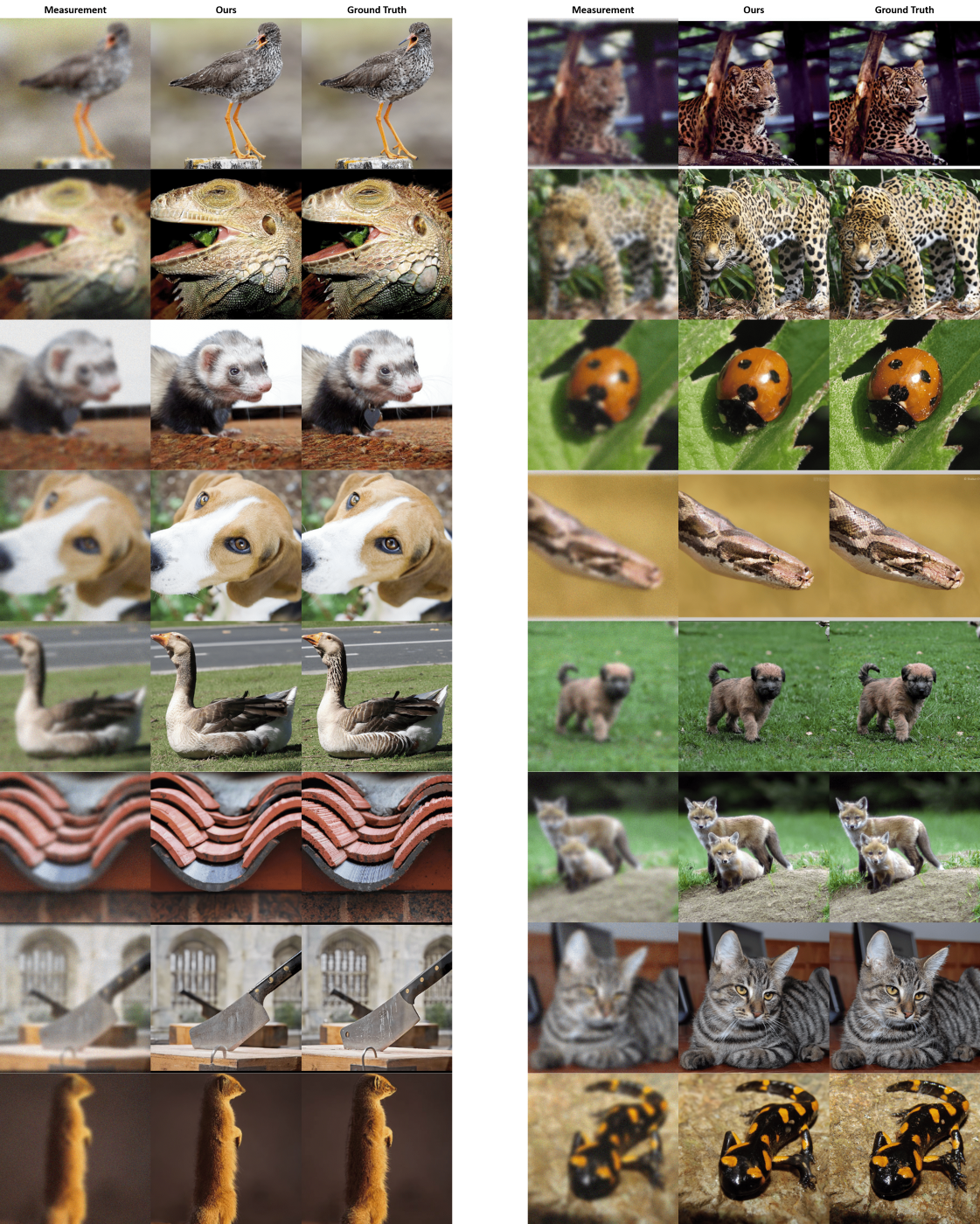}}}
  \caption{The results of SaFaRI, Gaussian deblurring on the ImageNet 256$\times$256 dataset.
  }
  \label{fig:imnt_gd}
\end{figure*}

\begin{figure*}[t]
  \centering
    \centerline{{\includegraphics[width=0.9\linewidth]{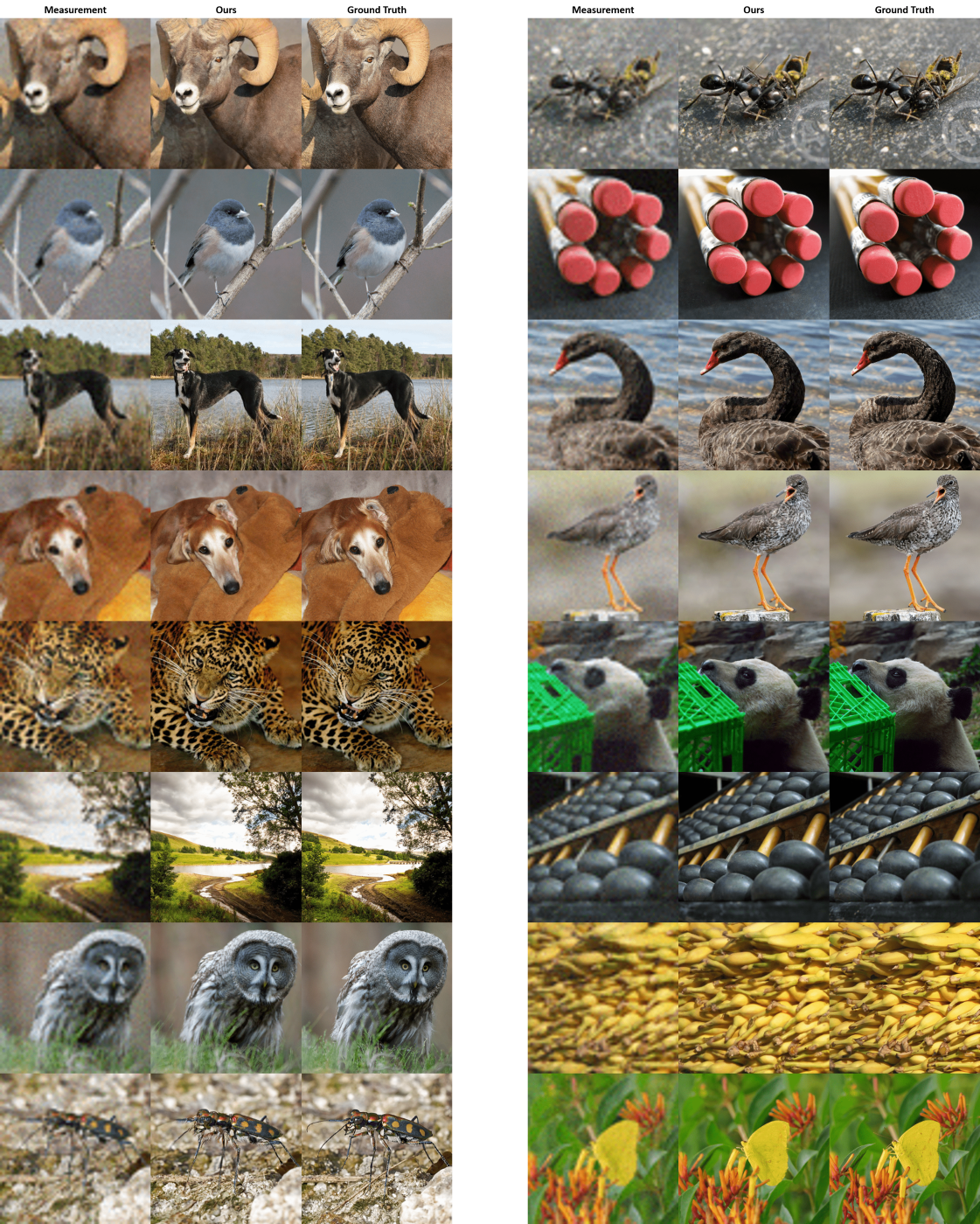}}}
  \caption{The results of SaFaRI, SR on the ImageNet 256$\times$256 dataset.
  }
  \label{fig:imnt_sr}
\end{figure*}

%%%%%%%%%%%%%%%%%%%%%%%%%%%%%%%%%%%%%%%%%%%%%%%
%%%%%%%%% restuls, ffhq

\begin{figure*}[t]
  \centering
    \centerline{{\includegraphics[width=0.9\linewidth]{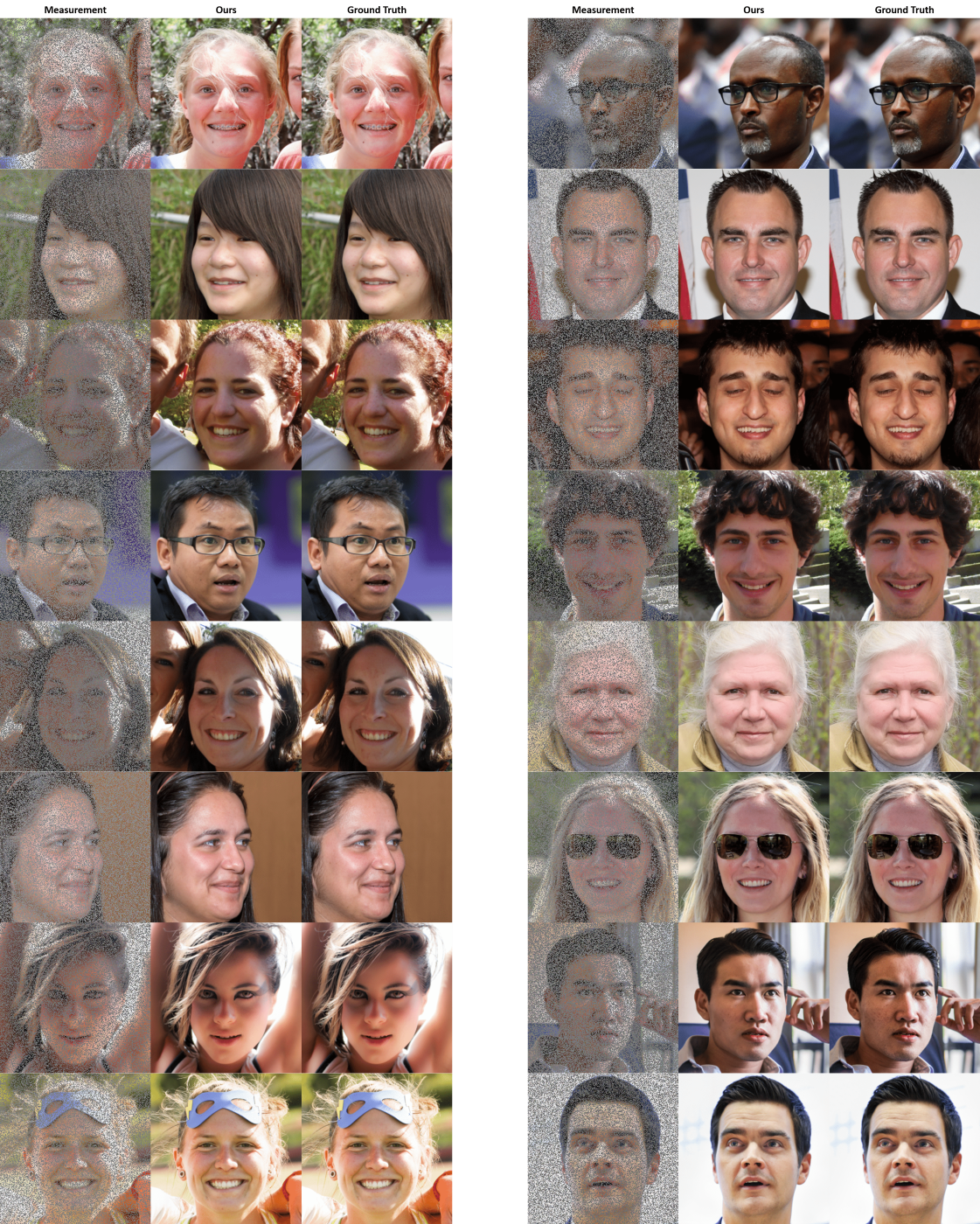}}}
  \caption{ The results of SaFaRI, random-mask inpainting on the FFHQ 256$\times$256 dataset. 
  }
  \label{fig:ffhq_ir}
\end{figure*}

\begin{figure*}[t]
  \centering
    \centerline{{\includegraphics[width=0.9\linewidth]{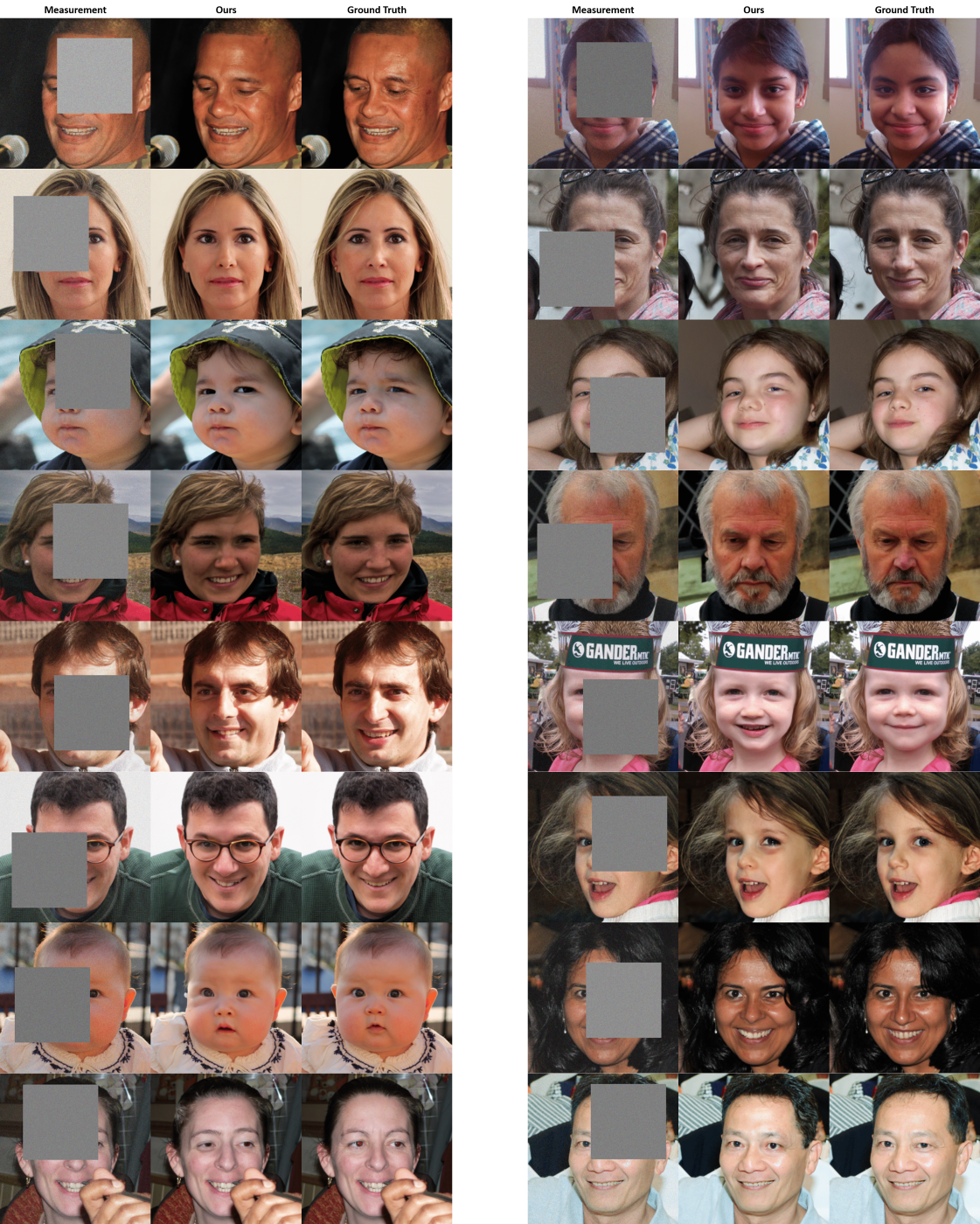}}}
  \caption{The results of SaFaRI, box-mask inpainting on the FFHQ 256$\times$256 dataset.
  }
  \label{fig:ffhq_ib}
\end{figure*}

\begin{figure*}[t]
  \centering
    \centerline{{\includegraphics[width=0.9\linewidth]{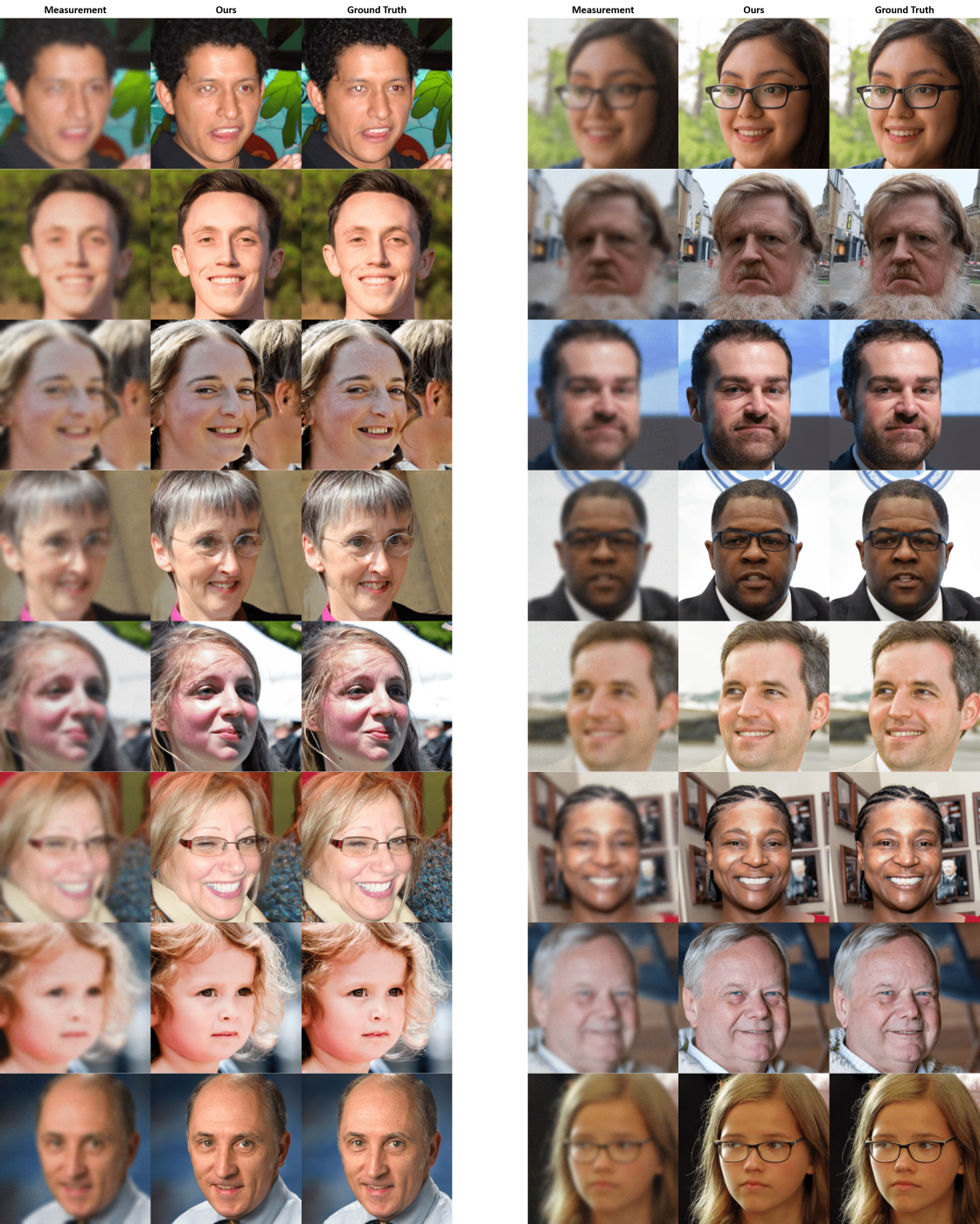}}}
  \caption{The results of SaFaRI, Gaussian deblurring on the FFHQ 256$\times$256 dataset.
  }
  \label{fig:ffhq_gd}
\end{figure*}

\begin{figure*}[t]
  \centering
    \centerline{{\includegraphics[width=0.9\linewidth]{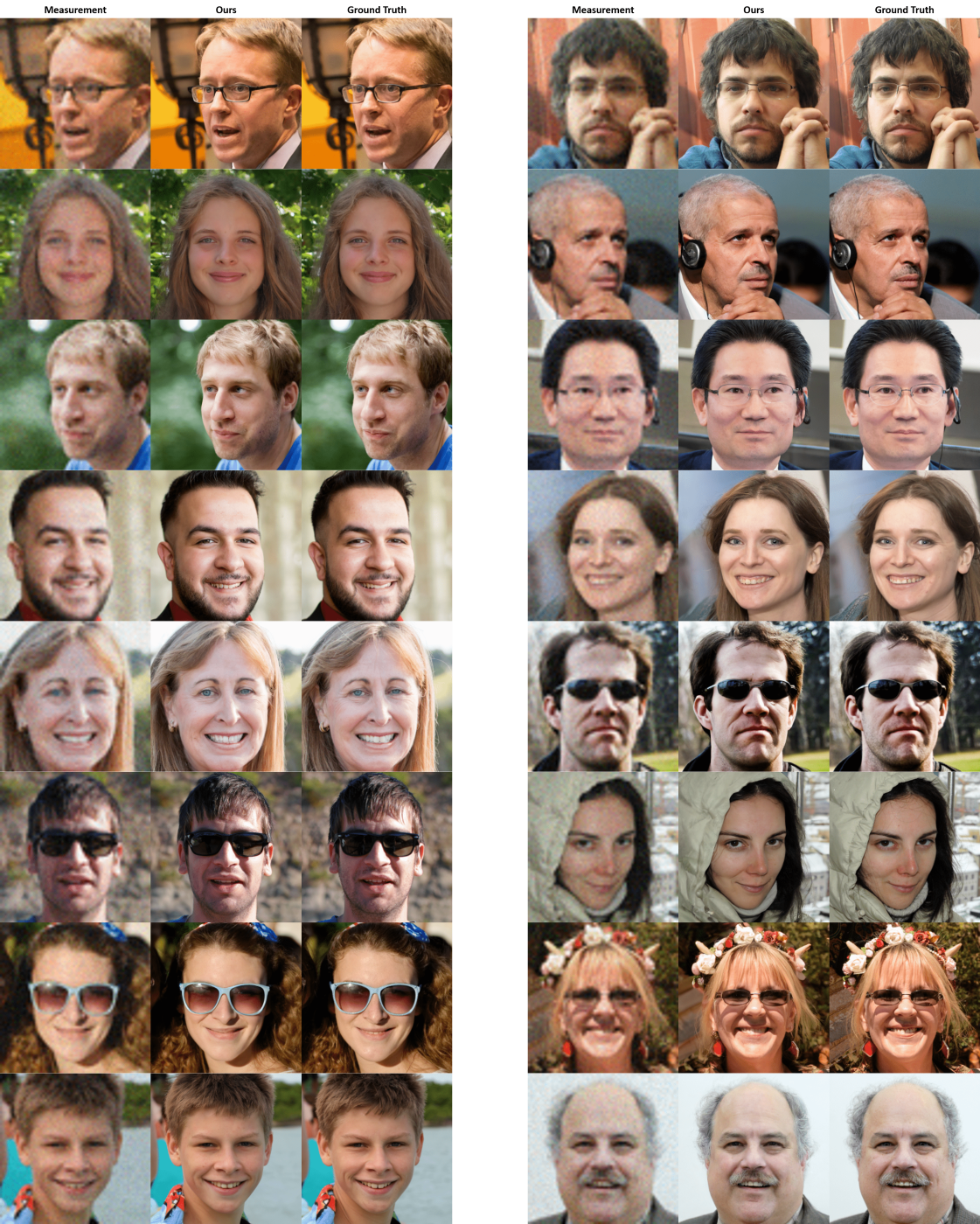}}}
  \caption{The results of SaFaRI, SR on the FFHQ 256$\times$256 dataset.
  }
  \label{fig:ffhq_sr}
\end{figure*}

\end{document}